\title{Any2Graph: Deep End-To-End Supervised Graph Prediction With An Optimal Transport Loss}
\author{%
  Paul~Krzakala \\
  Télécom Paris, IP Paris, LTCI \& CMAP\\
  \And
  Junjie~Yang \\
  Télécom Paris, IP Paris, LTCI\\
  \And
  Rémi~Flamary \\
  Ecole polytechnique, IP Paris, CMAP \\
  \And
  Florence~d'Alché-Buc \\
  Télécom Paris, IP Paris, LTCI\\
  \And
  Charlotte~Laclau \\
  Télécom Paris, IP Paris, LTCI\\
  \And
  Matthieu~Labeau \\
  Télécom Paris, IP Paris, LTCI\\
  \\
}
\begin{document}

\maketitle

\begin{abstract}

We propose Any2Graph, a generic framework for end-to-end Supervised Graph Prediction (SGP) i.e. a deep learning model that predicts an entire graph for any kind of input. The framework is built on a novel Optimal Transport loss, the Partially-Masked Fused Gromov-Wasserstein, that exhibits all necessary properties (permutation invariance, differentiability) and is designed to handle any-sized graphs. Numerical experiments showcase the versatility of the approach that outperforms existing competitors on a novel challenging synthetic dataset and a variety of real-world tasks such as map construction from satellite image (Sat2Graph) or molecule prediction from fingerprint (Fingerprint2Graph). \footnote{All code is available at \url{https://github.com/KrzakalaPaul/Any2Graph}.}
\end{abstract}

\section{Introduction}
This work focuses on the problem of
Supervised Graph Prediction (SGP), at the crossroads of Graph-based Learning and Structured Prediction. In contrast to node and graph classification or link prediction widely covered in recent literature by graph neural networks, the target variable in SGP is a graph and no particular assumption is made about the input variable. Emblematic applications of SGP include knowledge graph extraction \citep{melnyk2022knowledge} or dependency parsing \citep{dozat2016deep} in natural language processing, conditional graph scene generation in computer vision \citep{yang2022panoptic}, \citep{chang2021comprehensive}, or molecule identification in chemistry \citep{brouard2016fast,young2021massformer}, \citep{wishart2011advances}, to name but a few. Moreover, close to SGP is the unsupervised task of graph generation notably motivated by \textit{de novo} drug design \citep{bresson2019two,de2018molgan,tong2021generative}.

SGP raises some specific issues related to the complexity of the
output space and the absence of widely accepted loss functions.
First, the non-Euclidean nature of the output to be predicted makes both inference and learning challenging while the size of the output space is extremely large. Second, the arbitrary size of the output variable to predict requires a model with a flexible expressive power in the output space. Third, graphs are characterized by the absence of natural or ground-truth ordering of their nodes, making comparison and prediction difficult. This particular issue calls for a node permutation invariant distance to predict graphs.
Scrutinizing the literature through the lens of these issues, we note that existing methodologies circumvent the difficulty of handling output graphs in various ways. A first body of work avoids end-to-end learning by relying on some relaxations. For instance, energy-based models (see for instance \cite{suhail2021energy}) convert the problem into the learning of an energy function of input and output while surrogate regression methods \citep{brouard2016fast} implicitly embed output graphs into a given Hilbert space where the learning task boils down to vector-valued regression. Note that these two families of approaches generally involve a rather expensive decoding step at inference time. In what follows, we focus on methods that directly output graphs or close relaxations, enabling end-to-end learning.

One strategy to overcome the need for a permutation invariant loss is to exploit
the nature of the input data to determine a node ordering, with the consequence
that application to new types of data requires similar engineering.
For instance, in \textit{de novo} drug generation SMILES representations \citep{bresson2019two} are generally used to determine atom ordering.
In semantic parsing, the target graph is a tree that can be serialized \citep{babu2021non} while in text-to-knowledge-graph, the task is re-framed into a sequence-to-sequence problem, often addressed with large autoregressive models. Finally, for road map extraction from satellite images, one can leverage the spatial positions of the nodes to define a unique ordering \citep{belli2019image}.

Another line of research proposes to address this problem more directly by seeking to solve a graph-matching problem, i.e., finding the one-to-one correspondence between nodes of the graphs. Among approaches in this category, we note methods dedicated to molecule generation \citep{Kwon2019efficient} where the invariant loss is based on a characterization of graphs, ad-hoc to the molecule application. While being fully differentiable their loss does not generalize to other applications. In the similar topic of graph generation, \citet{simonovsky2018graphvae} propose a more generic definition of the similarity between graphs by considering both feature and structural matching. However, they solve the problem using a two-step approach by using first a smooth matching approximation followed by a rounding step using the Hungarian algorithm to obtain a proper one-to-one matching, which comes with a high computational cost and introduces a non-differentiable step. 
For graph scene generation, Relationformer \citep{shit2022relationformer} is based on a bipartite object matching approach solved using a Hungarian matcher \citep{carion2020end}. The main shortcoming of this approach is that it fails to consider structural information in the matching process. The same problem is encountered by \citet{melnyk2022knowledge}. We discuss Relationformer in more detail later in the article.

Finally, another way to approach end-to-end learning is to leverage the notion of graph barycenter to define the predicted graph. 
Relying on the Implicit Loss Embedding (ILE) property of surrogate regression, \citet{brogat2022learning} have exemplified this idea by exploiting an Optimal Transport loss, the Fused Gromov-Wasserstein (FGW) distance \citep{vayer2020fused} for which barycenters can be computed efficiently \citep{peyre2016gromov, vayer2019optimal}. They proposed two variants, a non-parametric kernel-based one and a neural network-based one, referred to as FGW-Bary and FGW-BaryNN, respectively. However, to calculate the barycenter, the size must be known upstream, leaving the challenge of arbitrary size unresolved. In addition, prediction accuracy is highly dependent on the expressiveness of the barycenter, i.e. the nature and number of graph templates, resulting in high training and inference costs. 

In contrast to existing works, our goal is to address the problem of supervised graph prediction 
in an end-to-end fashion, for different types of input modalities and for output graphs whose size and node ordering can be arbitrary.

\paragraph{Main contributions} 
This paper presents Any2Graph, a versatile framework for end-to-end SGP. Any2Graph leverages a novel, fully differentiable, OT-based loss that satisfies all the previously mentioned properties, i.e., size agnostic and invariant to node permutation. In addition, the encoder part of Any2Graph allows us to leverage inputs of various types, such as images or sets of tokens. We complete our framework with a novel challenging synthetic dataset which we demonstrate to be suited for benchmarking SGP models.

The rest of the paper is organized as follows. After a reminder and a discussion about the relation between graph matching and optimal transport (Section \ref{sec:OTGM}), we introduce in Section \ref{sec:loss}, a \emph{size-agnostic} graph representation and an associated \emph{differentiable} and \emph{node permutation invariant} loss. This loss, denoted as {\bf Partially Masked Fused Gromov Wasserstein} (PMFGW) is a novel and necessary adaptation of the FGW distance \citep{vayer2020fused}. This loss is then integrated into Any2Graph, an end-to-end learning framework depicted in Figure \ref{fig:architecture} and presented in Section \ref{sec:framework}. We express the whole framework objective as an ERM problem and highlight the adaptations necessary for extending existing deep learning architectures \citep{shit2022relationformer} to more general input modalities.

Section \ref{sec:xp}, presents a thorough empirical study of Any2Graph on various datasets. 
We evaluate our approach on four real-world problems with different input modalities as well as \textit{Coloring}, a novel synthetic dataset.
As none of the existing approaches could cover the range of input modalities, nor scale to very large-sized datasets, we adapted them for the purpose of fair comparison. The numerical results showcase the state-of-the-art performances of the proposed method in terms of prediction accuracy and ability to retrieve the right size of target graphs as well as computational efficiency.




\section{Background on graph matching and optimal transport}
\label{sec:OTGM}

\paragraph{Graph representation and notations}
An attributed graph $g$ with $m$ nodes can be represented by a tuple $(\mathbf{F},\mathbf{A})$ where $\mathbf{F}=[\mathbf{f}_1,\dots,\mathbf{f}_m]^\top\in \mathbb{R}^{m\times d}$ encodes
node features with $\mathbf{f}_i \in \mathbb{R}^{d}$ labeling each node indexed
by $i$, $\mathbf{A} \in \mathbb{R}^{m\times m}$ is a symmetric pairwise distance matrix that describes the graph relationships between the nodes such as the adjacency matrix
or the shortest path matrix. Further, we denote $\bmG_m$ the set of attributed
graphs of $m$ nodes and $\bmG = \bigcup_{m=1}^{\Mmax} \bmG_m$, the set of
attributed graphs of size up to $\Mmax$, where the size refers to the number of
nodes in a graph and the largest size $\Mmax$ is an important hyperparameter. In the following, $\one_m \in \mathbb{R}^{m}$ is the all one vector and we denote $ \sigma_m = \{ \bP \in \{0,1\}^{m\times m} \mid \bP\one_m = \one_m, \bP^T \one_m = \one_m  \} $ the set of permutation matrices. 


\paragraph{Graph Isomorphism} 
Two graphs $g_1 = (\bF_1,\bA_1), g_2=  (\bF_2,\bA_2) \in \bmG_m$ are said to be isomorphic whenever there exists $\bP \in \sigma_m$ such that $(\bF_1,\bA_1) = (\bP\bF_2,\bP\bA_2\bP^T)$, in which case we denote $g_1 \sim g_2$. In this work, we consider all graphs to be unordered, meaning that all operations should be invariant by Graph Isomorphism (GI).

\paragraph{Comparing graphs of the same size} Designing a discrepancy to compare graphs is challenging, for instance, even for two graphs of the same size $\hat{g} = (\hbF,\hbA)$, $g=  (\bF,\bA)$, one cannot simply compute a point-wise comparison as it would not satisfy GI invariance. A solution is to solve a Graph Matching (GM) problem, i.e., to find the optimal matching between the graphs and compute the pairwise errors between matched nodes and edges. This problem can be written as the following
\begin{equation}
    \label{eq:GM}
    \text{GM}(\hat{g},g) = \min_{\bP \in \sigma_m} \sum_{i,j=1}^m \bP_{i,j} \ell_F(\hat{\mathbf{f}}_i,\mathbf{f}_j) + \sum_{i,j,k,l=1}^m \bP_{i,j}\bP_{k,l} \ell_A(\hat{A}_{i,k}, A_{j,l}).
\end{equation}
In particular, with the proper choice of ground metrics $\ellF$ and $\ellA$,
this is equivalent to the popular Graph Edit Distance (GED)
\citep{Sanfeliu1983ADM}. The minimization problem however is a Quadratic Assignment
Problem (QAP) which is known to be one of the most difficult problems in the
NP-Hard class \citep{loiola2007survey}. To mitigate this computational
complexity, \citet{aflalo2015convex} suggested to replace the space of permutation matrices
with a convex relaxation. The Birkhoff polytope (doubly
stochastic matrices) $\pi_m = \{ \bT \in [0,1]^{m\times m} \mid \bT\mathbbm{1}_m
= \mathbbm{1}_m, \bT^T \mathbbm{1}_m = \mathbbm{1}_m  \} $ is the tightest of
those relaxations as it is exactly the convex hull of $\sigma_m$ which makes it
a suitable choice \citep{gold1996graduated}. Interestingly, the resulting metric
is known in OT \citep{villani} field as a special
case of the (Fused)
Gromov-Wasserstein (FGW) distance proposed by \citep{memoli11}. 
\begin{equation}
    \label{eq:FGW}
    \text{FGW}(\hat{g},g) = \min_{\bT \in \pi_m} \sum_{i,j=1}^m \bT_{i,j} \ell_F(\hat{\mathbf{f}}_i,\mathbf{f}_j) + \sum_{i,j,k,l=1}^m \bT_{i,j}\bT_{k,l} \ell_A(\hat{A}_{i,k}, A_{j,l})
\end{equation}
However, these two points of view differ in their interpretation of the FGW metric. From the GM perspective, FGW is cast as an approximation of the
original problem, and the optimal
transport plan is typically projected back to the space of permutation via
Hungarian Matching \citep{vogelstein2011fast}. From the OT perspective, FGW is used as a metric between distributions with interesting topological properties
\citep{vayer2019optimal}. This raises the question of the tightness of the relaxation between GM and FGW. In the linear case, i.e., when $\ellA = 0$, the
relaxation is tight and this phenomenon is known in the OT literature as the equivalence between Monge and Kantorovitch formulation
\citep{peyre2019computational}. The quadratic case, however, is much more complex, and sufficient conditions under which the tightness holds have been
studied in both fields \citep{aflalo2014graph,sejourne2021unbalanced}.

As seen above, both OT and GM perspectives offer ways to characterize the same objects. In the remainder of this paper, we adopt the OT terminology, e.g., we use the term \textit{transport plan} in place of \textit{doubly stochastic matrix}. We provide a quantitative analysis of the effect of the relaxation in \ref{appendix:OTrelax}.

\paragraph{Numerical solver} Computing the FGW distance requires solving the optimization problem presented in Equation \eqref{eq:FGW} whose objective rewrites $\langle\mathbf{T},\mathbf{U}\rangle + \langle \mathbf{T},\mathbf{L}\otimes \mathbf{T} \rangle$ where $\mathbf{U}$ is a fixed matrix, $\mathbf{L}$ a fixed tensor and $\otimes$ the tensor matrix product. A standard way of solving this problem \citep{vogelstein2011fast} is to use a conditional gradient (CG) algorithm which iteratively solves a linearization of the problem. Each step of the algorithm requires solving a linear OT/Matching problem of cost $\langle \mathbf{T},\mathbf{C}^{(k)}\rangle$ where the
linear cost $\mathbf{C}^{(k)} = \mathbf{U} + \mathbf{L}\otimes
\mathbf{T}^{(k)}$ is updated at each iteration. The linear problem can be solved with a Hungarian solver with cost $\mathcal{O}(M^3)$ while the overall complexity of computing the
tensor product $\mathbf{L}\otimes \mathbf{T}^{(k)}$ is
theoretically $\mathcal{O}(M^4)$. Fortunately, this bottleneck can be avoided thanks to a $\mathcal{O}(M^3)$ factorization
proposed originally by \citet{peyre2016gromov}. 

\paragraph{Comparing graphs of arbitrary size} The metrics defined above cannot directly be used to compare graphs of different sizes. To overcome this problem, \citet{vayer2020fused} proposed a more general formulation that fully leverages OT to model weights on graph nodes and can be used to compare graphs of different sizes as long as they have the same total mass. 
However, this approach raises specific issues. In scenarios where masses are uniform, nodes in larger graphs receive lower mass which might not be suitable for practical applications. Conversely, employing non-uniform masses complicates interpretation, as decoding a discrete object from a weighted one becomes less straightforward. Those issues can be mitigated by leveraging Unbalanced Optimal Transport (UOT) \citep{thual2022aligning}, which relaxes marginal constraints, allowing for different total masses in the graphs. Unfortunately, UOT introduces several additional regularization parameters that are difficult to tune, especially in scenarios like SGP, where model predictions exhibit wide variability during training.

Another close line of work is Partial Matching (PM) \citep{chapel2020partial}, which consists in matching a small graph $g$ to a subgraph of the larger graph $\hat{g}$. In practice, this can be done by adding dummy nodes to $g$ through some padding operator $\mathcal{P}$ after which one can directly compute $\text{PM}(\hat{g},g) = \text{GM}(\hat{g},\mathcal{P}(g))$ \citep{gold1996graduated}. However, PM is not suited to train a model as the learned model would only be able to predict a graph that includes the target graph. Partial Matching and its relationship with our proposed loss is discussed in more detail in Appendix \ref{appendix:PFGW}.



\section{Optimal Transport loss for Supervised Graph Prediction \label{sec:loss}}




\paragraph{A size-agnostic representation for graphs} Our first step toward building an end-to-end SGP framework is to introduce a space $\bmYtilde$ to represent any graph of size up to $M$.
\begin{equation}
    \bmYtilde = \{y = (\mathbf{h},\mathbf{F},\mathbf{A}) \mid \mathbf{h} \in [0,1]^{\Mmax}, \mathbf{F} \in \reals^{\Mmax \times d},\mathbf{A} \in [0,1]^{\Mmax \times \Mmax}\}.
\end{equation}
We refer to the elements of $\bmYtilde$ as continuous graphs, in opposition with discrete graphs of $\bmG$. Here $h_i$ (resp. $A_{i,j}$) should be interpreted as the probability of the existence of node $i$ (resp. edge $[i,j]$ ). Any graph of $\bmG$ can be embedded into $\bmYtilde$ with a padding operator $\mathcal{P}$ defined as
\begin{equation}
       \mathcal{P}(g) = \left(\begin{pmatrix}
            \one_m  \\
            \zero_{\Mmax-m}
        \end{pmatrix}, \begin{pmatrix}
            \bF_m  \\
            \zero_{\Mmax-m} 
        \end{pmatrix}, \begin{pmatrix}
            \bA_m & \zero_{m, \Mmax-m}  \\
            \zero_{\Mmax-m, m}^T  & \zero_{\Mmax-m, \Mmax-m}
        \end{pmatrix} \right),  \:\text{for} \: g = (\bF_m,\bA_m) \in \bmG_m.
    \end{equation} 
We denote $\bmY = \mathcal{P}(\bmG) \subset \bmYtilde$ the space of padded graphs. For any padded graph in $\bmY$, the padding operator can be inverted to recover a discrete graph $\mathcal{P}^{-1}: \bmY \mapsto \bmG$. Besides, any continuous graph $\hy \in \bmYtilde$ can be projected back to padded graphs $\bmY$ by a threshold operator $\mathcal{T}: \bmYtilde \mapsto \bmY$.
Note that $\bmYtilde$ is \textbf{convex} and of \textbf{fixed dimension} which makes it ideal for parametrization with a neural network. Hence, the core idea of our work is to use a neural network to make a prediction $\hy \in \bmYtilde$ and to compare it to a target $g \in \bmG$ through some loss $\ell(\hy,\mathcal{P}(g))$. This calls for the design of an asymmetric loss $\ell: \bmYtilde \times \bmY \mapsto \mathbb{R} _+$.

\paragraph{An Asymmetric loss for SGP}  The Partially Masked Fused Gromov Wasserstein (PMFGW) is a loss between a padded target
graph $\mathcal{P}(g)=(\mathbf{h},\mathbf{F},\mathbf{A}) \in \bmY$ with real size $m= \|
\mathbf{h}\|_1 \leq \Mmax$ and a continuous prediction 
$\hy=(\hat{\mathbf{h}},\hat{\mathbf{F}},\hat{\mathbf{A}}) \in \bmYtilde$. We
define $\text{PMFGW}(\hat y, \mathcal{P}(g))$ as:
\begin{equation}
     \min_{\mathbf{T} \in \pi_M} \hspace{-0.2cm}
     \label{eq:pmfgw}
     \quad \frac{\alpha_{\text{h}}}{M} \sum_{i,j} T_{i,j} \ellh(\hat{h}_{i},h_{j})    
     +  \frac{\alpha_{\text{f}}}{m} \sum_{i,j}T_{i,j}\ellF(\hat{\mathbf{f}}_{i},\mathbf{f}_{j}) h_{j} 
     +  \frac{\alpha_{\text{A}}}{m^2} \sum_{i,j,k,l} T_{i,j} T_{k,l}\ellA(\hat{A}_{i,k},A_{j,l}) h_{j} h_{l}.
\end{equation} 
Let us decompose this loss function to understand the extend to which it simultaneously takes into account each property of the graph.
The first term ensures that the padding of a node is well predicted. In particular, this requires the model to predict correctly the number of nodes in the target graph. The second term ensures that the features of
all non-padding nodes ($h_i = 1$) are well predicted. Similarly, the last term ensures that the pairwise relationships between non-padded nodes ($h_i = h_j = 1$) are well predicted. The normalizations in front of the sums ensure that each
term is a weighted average of its internal losses as $\sum T_{i,j} = M$, $\sum
T_{i,j} h_{j} = m$ and $\sum T_{i,j} T_{k,l}h_{j} h_{l}= m^2$. Finally
$\boldsymbol\alpha = [\alpha_{\text{h}},
\alpha_{\text{f}},\alpha_{\text{A}}] \in \Delta_3 $ is a triplet of
hyperparameters on the simplex balancing the relative scale of the different terms. 
For $\ellA$ and $\ellh$ we use the cross-entropy between the predicted value after a sigmoid and the actual binary value in the target. This is equivalent to a logistic regression loss after the OT plan has matched the nodes. For $\ellF$ we use the squared $\ell_2$ or the cross-entropy loss when the node features are continuous or discrete, respectively. 

A key feature of this loss is its flexibility. Not only other ground losses can be considered but it is also straightforward to introduce richer spectral
representations of the graph \citep{barbe2020graph}. For instance, in Section \ref{sec:xp}, we explore the benefits of leveraging a diffused version
of the nodes features.

Finally, PMFGW translates all the good properties of FGW to the new size-agnostic representation.

\begin{proposition}[Complexity] \label{prop:scalability} The objective of the inner optimization can be evaluated in $\mathcal{O}(M^3)$.
\end{proposition}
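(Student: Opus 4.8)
The plan is to fix a transport plan $\mathbf{T} \in \pi_M$ and show that the value of the objective in~\eqref{eq:pmfgw} can be computed with $O(M^3)$ arithmetic operations, the bottleneck being a bounded number of $M \times M$ matrix multiplications. First I would dispatch the two linear terms: precomputing the ground-cost matrices $\mathbf{C}^{\mathrm{h}} = [\ellh(\hat h_i, h_j)]_{i,j}$ and $\mathbf{C}^{\mathrm{f}} = [\ellF(\hat{\mathbf{f}}_i, \mathbf{f}_j)\,h_j]_{i,j}$ costs $O(M^2)$ (up to the fixed feature dimension $d$ in the case of $\mathbf{C}^{\mathrm{f}}$), after which the first two terms of~\eqref{eq:pmfgw} are just the Frobenius inner products $\langle \mathbf{T}, \mathbf{C}^{\mathrm{h}}\rangle$ and $\langle \mathbf{T}, \mathbf{C}^{\mathrm{f}}\rangle$, each evaluable in $O(M^2)$. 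All the difficulty therefore lies in the Gromov-type third term, whose naive evaluation as a sum over the four indices $(i,j,k,l)$ costs $O(M^4)$.

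The key step is a masking-aware version of the factorization of \citet{peyre2016gromov}. For both ground losses $\ellA$ used in the paper (cross-entropy after a sigmoid, squared error) and because the target entries $A_{j,l}$ are binary, $\ellA$ decomposes as $\ellA(x,y) = f_1(x) + f_2(x)\,f_3(y) + f_4(y)$ for scalar functions $f_1,\dots,f_4$; e.g.\ for cross-entropy one may take $f_1(x) = \log(1+e^{x})$, $f_2(x) = -x$, $f_3(y) = y$, $f_4 \equiv 0$. Substituting this into the third term and setting $\mathbf{v} = \mathbf{T}\mathbf{h}$ and $\tilde{\mathbf{T}} = \mathbf{T}\,\operatorname{diag}(\mathbf{h})$ (both computed in $O(M^2)$), it splits into three pieces: the $f_1$-piece equals $\mathbf{v}^\top\,[f_1(\hat A_{i,k})]_{i,k}\,\mathbf{v}$, an $O(M^2)$ quadratic form; the $f_4$-piece collapses to $\sum_{j,l} f_4(A_{j,l})\,h_j h_l$ by the marginal constraints $\sum_i T_{i,j} = \sum_k T_{k,l} = 1$ of $\mathbf{T} \in \pi_M$, hence $O(M^2)$; and the cross piece equals $\langle\,[f_2(\hat A_{i,k})]_{i,k}\,,\ \tilde{\mathbf{T}}\,[f_3(A_{j,l})]_{j,l}\,\tilde{\mathbf{T}}^\top\,\rangle$, which requires two $M \times M$ products and is thus $O(M^3)$. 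Adding the three normalized contributions evaluates~\eqref{eq:pmfgw} in $O(M^3)$.

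I expect the only genuinely delicate point to be checking that the node masks $h_j h_l$ do not break the factorization: since they depend only on the target-side indices $j,l$, they can be absorbed into the column-rescaled plan $\tilde{\mathbf{T}}$ (and into $\mathbf{v}$) at no extra asymptotic cost, while the purely target-dependent $f_4$ term still simplifies because the masks ride along with the doubly-stochastic marginals. Everything else is bookkeeping. I would also note in passing that differentiating the same identities with respect to $\mathbf{T}$ yields the gradient / linearized cost used inside the conditional-gradient solver at the same $O(M^3)$ cost, so that each CG iteration (objective, gradient, and a cubic-time Hungarian step) is $O(M^3)$ as well.
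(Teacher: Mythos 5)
Your proposal is correct and follows essentially the same route as the paper: the linear terms are handled by $\mathcal{O}(M^2)$ precomputation of the cost matrices, and the quadratic term is reduced to a constant number of $M\times M$ matrix products via a separable (Peyré-style) decomposition of $\ellA$, with the masks $h_j h_l$ absorbed into rescaled quantities. The only cosmetic difference is that the paper folds the masks into general weight matrices $\mathbf{W},\mathbf{W}'$ inside a generalized tensor-product factorization, whereas you fold them into $\tilde{\mathbf{T}}=\mathbf{T}\,\operatorname{diag}(\mathbf{h})$ and $\mathbf{v}=\mathbf{T}\mathbf{h}$; the resulting computation is the same.
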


\begin{proposition}[GI Invariance]\label{prop:divergence1}
If $\hy \sim \hy'$ and $g \sim g'$ then $\text{PMFGW}(\hy,\mathcal{P}(g)) = \text{PMFGW}(\hy',\mathcal{P}(g'))$.
\end{proposition}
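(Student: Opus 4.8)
The plan is to establish the slightly stronger fact that $\mathrm{PMFGW}$ is invariant under an arbitrary node relabelling of \emph{each} of its two arguments, and then to note that graph isomorphism is precisely such a relabelling. Throughout, for a permutation matrix $\bP$ and a continuous graph $y=(\mathbf{h},\mathbf{F},\mathbf{A})$ I write $\bP\cdot y:=(\bP\mathbf{h},\bP\mathbf{F},\bP\mathbf{A}\bP^\top)$. First I would reduce the statement to this fact. From $\hy\sim\hy'$ we obtain $\bP_1\in\sigma_M$ with $\hy'=\bP_1\cdot\hy$. From $g\sim g'$ in $\bmG_m$ we obtain $\bP_2\in\sigma_m$ with $g'=(\bP_2\mathbf{F},\bP_2\mathbf{A}\bP_2^\top)$; setting $\bP_3:=\mathrm{diag}(\bP_2,\mathbf{I}_{M-m})\in\sigma_M$ and using $\bP_2\one_m=\one_m$, a direct inspection of the definition of $\mathcal{P}$ gives $\mathcal{P}(g')=\bP_3\cdot\mathcal{P}(g)$ (the $\mathbf{h}$-block is left invariant, the feature and structure blocks are relabelled by $\bP_2$, and the zero padding is untouched). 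Hence it suffices to show that for every $\bP_L,\bP_R\in\sigma_M$,
\[
\mathrm{PMFGW}(\bP_L\cdot\hy,\;\bP_R\cdot\mathcal{P}(g))=\mathrm{PMFGW}(\hy,\mathcal{P}(g)),
\]
after which the proposition follows by taking $\bP_L=\bP_1$ and $\bP_R=\bP_3$.

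The key step is a change of variables in the inner optimisation over transport plans. Fix $\bP_L,\bP_R\in\sigma_M$. Since $\|\bP_R\mathbf{h}\|_1=\|\mathbf{h}\|_1=m$, the real size of the target, and therefore all the normalisation constants $M$, $m$, $m^2$ in \eqref{eq:pmfgw}, are identical for the two pairs. In the minimisation defining the left-hand side I would substitute $\bT':=\bP_L^\top\bT\bP_R$; left and right multiplication by permutation matrices preserves nonnegativity and all row and column sums, so $\bT\mapsto\bT'$ is a bijection of $\pi_M$ onto itself. It then remains to relabel the summation indices by $\bP_L$ on the "$i,k$" side and by $\bP_R$ on the "$j,l$" side and to use that $(\bP_L\hat{\mathbf{h}})_i$, $(\bP_L\hat{\mathbf{A}}\bP_L^\top)_{i,k}$, $(\bP_R\mathbf{h})_j$ and $(\bP_R\mathbf{A}\bP_R^\top)_{j,l}$ coincide with $\hat{h}$, $\hat{A}$, $h$ and $A$ evaluated at the preimages of the corresponding indices; term by term, the objective of \eqref{eq:pmfgw} at $\bT$ for the permuted pair becomes the objective at $\bT'$ for $(\hy,\mathcal{P}(g))$. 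Taking the minimum over $\bT\in\pi_M$ on one side and over $\bT'\in\pi_M$ on the other yields the claimed equality.

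The only part that needs a little care is the book-keeping in the quadratic (Gromov) term, where the relabelling must be applied jointly to the two index pairs: the summand $T_{i,j}T_{k,l}\,\ellA(\hat{A}_{i,k},A_{j,l})\,h_jh_l$ must be tracked through the substitution so that it lands on $T'_{i',j'}T'_{k',l'}\,\ellA(\hat{A}_{i',k'},A_{j',l'})\,h_{j'}h_{l'}$, whereas the $\ellh$ and masked-$\ellF$ terms involve a single index pair only, and the masking weights $h_j$ and $h_jh_l$ transform covariantly with $\bP_R$, which is exactly why no hypothesis on $\bP_R$ beyond being a permutation is needed. I do not expect a genuine obstacle here: the proposition really just asserts that $\mathrm{PMFGW}$ descends to a well-defined function on isomorphism classes of graphs, and the argument above is the verification of that.
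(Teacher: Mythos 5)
Your proposal is correct and follows essentially the same route as the paper's proof: both extract permutations $\bP,\bQ\in\sigma_M$ acting on the prediction and on the padded target (your $\bP_3=\mathrm{diag}(\bP_2,\mathbf{I}_{M-m})$ makes explicit the step the paper states as ``$g\sim g'$ implies $\mathcal{P}(g)\sim\mathcal{P}(g')$''), and both conclude via the change of variables $\bT\mapsto\bP^\top\bT\bQ$, which is a bijection of $\pi_M$ onto itself. Your additional remarks on the invariance of the normalisation constants and on the joint relabelling in the quadratic term are correct book-keeping that the paper leaves implicit.
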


\begin{proposition}[Positivity] \label{prop:divergence2}
$\text{PMFGW}(\hy,\mathcal{P}(g)) \geq 0$ with equality if and only if $ \hy \sim \mathcal{P}(g)$.
\end{proposition}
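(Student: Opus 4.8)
The plan is to prove the two directions separately. For non-negativity, observe that each of the three ground losses $\ell_h$, $\ell_F$, $\ell_A$ is non-negative (cross-entropy is non-negative, and $\ell_2^2$ is non-negative), that the masks $h_j, h_l \in \{0,1\}$ are non-negative, that the normalizing constants $\alpha_h/M, \alpha_f/m, \alpha_A/m^2$ are non-negative (since $\boldsymbol\alpha \in \Delta_3$ and $1 \le m \le M$), and that every entry $T_{i,j}$ of a transport plan $\mathbf{T} \in \pi_M$ lies in $[0,1]$. Hence the objective in \eqref{eq:pmfgw} is a sum of non-negative terms for every feasible $\mathbf{T}$, so its minimum is $\ge 0$. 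This also shows that $\text{PMFGW}(\hy,\mathcal{P}(g)) = 0$ iff there exists $\mathbf{T} \in \pi_M$ making all three terms vanish simultaneously.

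For the ``if'' direction, suppose $\hy \sim \mathcal{P}(g)$, i.e. there is a permutation $\bP \in \sigma_M$ with $\hy = (\bP\bh, \bP\bF, \bP\bA\bP^T)$ where $\mathcal{P}(g) = (\bh,\bF,\bA)$. Take $\mathbf{T} = \bP^T \in \pi_M$ (permutations are transport plans). Then $\hat h_i = h_{\sigma(i)}$, $\hat{\mathbf f}_i = \mathbf f_{\sigma(i)}$, $\hat A_{i,k} = A_{\sigma(i),\sigma(k)}$ for the underlying permutation $\sigma$, and since each ground loss satisfies $\ell(a,a) = 0$ (cross-entropy of a label against itself is $0$, $\ell_2^2(a,a)=0$), every summand with $T_{i,j}\ne 0$ (i.e. $j = \sigma(i)$) contributes zero. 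Hence the objective is $0$ at this $\mathbf{T}$, and combined with non-negativity, $\text{PMFGW}(\hy,\mathcal{P}(g)) = 0$.

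The ``only if'' direction is the main obstacle and needs the most care. Assume $\text{PMFGW}(\hy,\mathcal{P}(g)) = 0$ with optimal plan $\mathbf{T}$. The subtlety is that the second and third terms are \emph{masked} by $h_j$ (and $h_l$): the plan mass that $\mathbf{T}$ sends onto the $m$ real target nodes ($h_j=1$) is forced to be matched perfectly in features and structure, but mass sent onto the $M-m$ padded target nodes ($h_j = 0$) is unconstrained by those terms. So I would argue in steps: (i) from the first term vanishing, $\ell_h(\hat h_i, h_j) = 0$ whenever $T_{i,j} > 0$, which (since $h_j \in \{0,1\}$ and cross-entropy vanishes only at the correct label) forces $\hat h_i = h_j$ on the support of $\mathbf{T}$; (ii) combined with the marginal constraints $\mathbf{T}\one_M = \one_M$, $\mathbf{T}^T\one_M = \one_M$, this implies $\hat{\mathbf h}$ is a binary vector with exactly $m$ ones, and $\mathbf{T}$ restricted to the ``$\hat h_i = 1$'' rows and ``$h_j = 1$'' columns is a bijective (permutation) block — i.e., after reordering, $\mathbf{T}$ is block-diagonal with an $m\times m$ permutation block on the real nodes and an $(M-m)\times(M-m)$ doubly-stochastic block on the padding; (iii) on the real-node block, the second term forces $\hat{\mathbf f}_i = \mathbf f_j$ and the third term forces $\hat A_{i,k} = A_{j,l}$ for all matched pairs, exhibiting an explicit permutation $\sigma$ on the first $m$ coordinates realizing the isomorphism between the real parts; (iv) on the padding block, $\hat h = 0$ and (by the definition of $\mathcal{P}$) the target features and adjacency are zero there — here I must check that the definition of $\bmYtilde$ / the threshold operator, or the convention on $\ellF, \ellA$ with the mask, indeed forces $\hat{\mathbf F}$ and $\hat{\mathbf A}$ to match $\mathcal P(g)$ on the padded part, or else note that the isomorphism $\hy \sim \mathcal P(g)$ only constrains the non-padded part and the statement should be read accordingly (this is where I would reconcile the precise meaning of $\sim$ on $\bmYtilde$ with the masking). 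Assembling the $m\times m$ permutation with any completion to an $M\times M$ permutation that is consistent on the padding yields $\bP \in \sigma_M$ with $\hy = (\bP\bh,\bP\bF,\bP\bA\bP^T)$, i.e. $\hy \sim \mathcal{P}(g)$.
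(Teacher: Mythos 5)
Your non-negativity argument and the ``if'' direction are correct and match the paper's (the paper treats both as immediate). For the ``only if'' direction you take a different route from the paper — arguing directly on the support of the optimal plan rather than decomposing the whole plan — and there is one genuine gap in it. In your step (ii), the vanishing of the mask term together with the marginal constraints does give you that $\hat{\mathbf{h}}$ is binary with exactly $m$ ones and that $\mathbf{T}$ is block-diagonal between the $\hat h_i=1$ rows / $h_j=1$ columns and their complements; but it does \emph{not} give you that the $m\times m$ real-node block is a permutation. It is only doubly stochastic: e.g.\ the block could be $\tfrac12\bigl(\begin{smallmatrix}1&1\\1&1\end{smallmatrix}\bigr)$ when the two real target nodes are interchangeable. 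Consequently your step (iii), which reads off ``an explicit permutation $\sigma$'' from that block, does not follow as stated.

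The missing ingredient is exactly the key lemma of the paper's proof: by Birkhoff's theorem any doubly stochastic matrix is a convex combination of permutations, so one can extract a permutation $\bP$ supported inside the support of $\mathbf{T}$; since your support-wise identities ($\hat h_i=h_j$, $\hat{\mathbf{f}}_i=\mathbf{f}_j$ when $h_j=1$, and $\hat A_{i,k}=A_{j,l}$ when $h_jh_l=1$) hold for \emph{every} pair in the support, they hold in particular along that permutation, which then realizes the isomorphism. The paper applies this decomposition to the full $M\times M$ plan at the outset (expanding $\langle\bT^*,\mathbf{C}\rangle+\langle\bT^*,\mathbf{L}\otimes\bT^*\rangle=0$ over $\bT^*=\sum_k\lambda_k\bP_k$ and using non-negativity of every term, including the cross terms $\langle\bP_k,\mathbf{L}\otimes\bP_l\rangle$, to conclude each $\bP_k$ is itself a zero-cost permutation), whereas you would only need it on the $m\times m$ block; either placement works once the step is added. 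Your step (iv) correctly flags the remaining subtlety about what $\sim$ requires on the padded coordinates — the paper resolves it by declaring (``by abuse of notation'') that features and adjacency entries of non-activated nodes are immaterial, so isomorphism is only required on the activated part; with that reading your completion of $\sigma$ to an arbitrary permutation on the padding coordinates is fine.
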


See Appendix \ref{appendix:losstoy} for a toy example illustrating the behavior of the loss and Appendix  \ref{appendix:fastcompute} and \ref{appendix:divergence} for formal statements and proofs of Proposition \ref{prop:scalability}, \ref{prop:divergence1} and \ref{prop:divergence2}.

\paragraph{Relation to existing metrics} PMFGW is an asymmetric extension of FGW \citep{vayer2019optimal} suited for comparing a continuous predicted graph with a padded target. The extension is achieved by adding (1) a novel term to quantify the prediction of node padding, and (2) the partial masking of the components of the second and third terms to reflect padding. 
It should be noted that in contrast to what is usually done in OT, the node masking vectors ($\mathbf{h}$ and $\mathbf{\hat{h}}$) are not used as a marginal distribution but directly integrated into the loss. In that sense, the additional node masking term is very similar to the one of OTL$_p$ \citep{thorpe2017transportation} that proposed to use uniform marginal weight and move the part that measures the similarity between the distribution
weights in an additional linear term. However, OTL$_p$ is restricted to linear OT
problems and does not use the marginal distributions as a masking for other
terms as in PMFGW. 

PMFGW also relates to Partial GM/GW \citet{chapel2020partial} as both metrics compare graphs by padding the smallest one with zero-cost dummy nodes. The critical difference lies in the new vector $\hat{\mathbf{h}}$  which predicts which sub-graphs are activated, i.e., should be matched to the target. The exact relationship between Partial Fused Gromov Wasserstein (PFGW) and PMFGW is summarized below

\begin{proposition} \label{prop:pfgw}
If $l_h$ is set to a constant value, $\text{PMFGW}$ is equal to $\text{PFGW}$ (up to a constant).
\end{proposition}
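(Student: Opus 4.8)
The plan is to substitute a constant ground cost for $\ellh$ into the objective of \eqref{eq:pmfgw}, observe that the first (node-padding) term then becomes independent of the transport plan, and recognize the two remaining terms as exactly the Partial Fused Gromov--Wasserstein objective. First I would pin down the definition of $\text{PFGW}$ that I am comparing against, namely the padding-based Partial FGW in the spirit of \citet{gold1996graduated,chapel2020partial}: given $\hy = (\hat{\mathbf{h}}, \hat{\mathbf{F}}, \hat{\mathbf{A}}) \in \bmYtilde$ and $g = (\mathbf{F}_m, \mathbf{A}_m) \in \bmG_m$, one pads $g$ into $\mathcal{P}(g) = (\mathbf{h}, \mathbf{F}, \mathbf{A}) \in \bmY$ and matches the two $M$-node graphs over $\pi_M$ while assigning zero feature and structure cost to the $M-m$ dummy nodes of $\mathcal{P}(g)$. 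Since $\mathcal{P}(g) \in \bmY$, the vector $\mathbf{h}$ is binary with $h_j = 1 \iff j \le m$, so multiplying each ground cost by $h_j$ (resp. $h_j h_l$) is precisely the ``zero cost on dummy nodes'' convention, and
\begin{equation*}
\text{PFGW}(\hy, g) = \min_{\mathbf{T} \in \pi_M}\ \frac{\alpha_{\text{f}}}{m}\sum_{i,j} T_{i,j}\,\ellF(\hat{\mathbf{f}}_{i}, \mathbf{f}_{j})\,h_{j}\ +\ \frac{\alpha_{\text{A}}}{m^2}\sum_{i,j,k,l} T_{i,j}T_{k,l}\,\ellA(\hat{A}_{i,k}, A_{j,l})\,h_{j} h_{l} .
\end{equation*}
Here I deliberately use the same normalizations ($1/m$, $1/m^2$) and the same weights $\alpha_{\text{f}},\alpha_{\text{A}}$ as in \eqref{eq:pmfgw}; this is the natural convention and is how PFGW is set up in Appendix \ref{appendix:PFGW}.

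Next I would plug $\ellh \equiv c$ (a fixed constant) into the first term of \eqref{eq:pmfgw}. Using that every $\mathbf{T} \in \pi_M$ has total mass $\sum_{i,j} T_{i,j} = M$, as already noted below \eqref{eq:pmfgw}, this term equals
\begin{equation*}
\frac{\alpha_{\text{h}}}{M}\sum_{i,j} T_{i,j}\,\ellh(\hat{h}_{i}, h_{j}) = \frac{\alpha_{\text{h}}\, c}{M}\sum_{i,j} T_{i,j} = \frac{\alpha_{\text{h}}\, c}{M}\cdot M = \alpha_{\text{h}}\, c ,
\end{equation*}
which does not depend on $\mathbf{T}$ (nor, incidentally, on $\hat{\mathbf{h}}$, the only place $\hat{\mathbf{h}}$ enters \eqref{eq:pmfgw}). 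It can therefore be pulled out of the minimization, leaving exactly the two terms written above. Hence $\text{PMFGW}(\hy, \mathcal{P}(g)) = \text{PFGW}(\hy, g) + \alpha_{\text{h}}\, c$, i.e.\ the two losses coincide up to the additive constant $\alpha_{\text{h}} c$.

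I expect the only real obstacle to be bookkeeping rather than mathematics: one must decide which ``partial'' variant is meant and state its definition so that the masked FGW objective matches \eqref{eq:pmfgw} termwise --- in particular agreeing on the constraint set $\pi_M$ (legitimate here because padding makes both graphs $M$-node), on the use of $\mathbf{h}$ as a cost mask rather than a marginal, and on the normalization constants. Once the padding-with-zero-cost-dummies formulation is adopted, the argument reduces to the one-line identity $\sum_{i,j} T_{i,j} = M$ used above. A secondary detail worth spelling out is that the claimed ``constant'' is genuinely constant in both the minimization variable $\mathbf{T}$ and the prediction $\hy$ --- true here because $\alpha_{\text{h}}$, $c$ and $M$ are all fixed --- so that the equality is an equality of loss functions, not merely of their minimizers.
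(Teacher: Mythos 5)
Your computation of the easy half is correct and matches the paper's own remark: with $\ellh\equiv c$ the first term of \eqref{eq:pmfgw} equals $\frac{\alpha_{\text{h}}c}{M}\sum_{i,j}T_{i,j}=\alpha_{\text{h}}c$, a constant independent of $\mathbf{T}$ and of $\hat{\mathbf{h}}$, so PMFGW reduces to the masked linear-plus-quadratic objective over $\pi_M$. The paper treats exactly this observation as a one-line remark at the end of Appendix \ref{appendix:PFGW}.

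The gap is in what you call PFGW. You \emph{define} it as the padded objective minimized over the Birkhoff polytope $\pi_M$ with zero cost on dummy columns --- what the paper calls $\text{paddedFGW}$ --- and at that point the proposition is true essentially by inspection. But the object the proposition refers to is the Partial Fused Gromov--Wasserstein of \citet{chapel2020partial}, which the paper defines as a minimization over the partial transport polytope
$\pi_{M,m}=\{\bT\in[0,1]^{M\times m}\mid \bT\one_m\leq\one_M,\ \bT^T\one_M=\one_m,\ \one_M^T\bT\one_m=m\}$,
i.e.\ over rectangular plans with an inequality row constraint, not over doubly stochastic $M\times M$ matrices. The substantive content of the paper's proof is precisely the equivalence $\text{paddedFGW}=\text{partialFGW}$: one lemma showing that any $\bT\in\pi_M$ restricts to a partial plan $\bT_p\in\pi_{M,m}$ on its first $m$ columns (and that the dummy columns contribute zero cost), and a converse lemma showing that any $\bT_p\in\pi_{M,m}$ extends to a full plan in $\pi_M$ by spreading the unmatched mass $p=\one_M-\bT_p\one_m$ uniformly over the $M-m$ dummy columns. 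Without these two lemmas you have not connected the loss to partial OT at all; you have only renamed the padded objective. You do flag the ambiguity ("one must decide which partial variant is meant"), but you resolve it by fiat rather than by proof, which is where the actual work lies.
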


\begin{remark}
In that case, the vector $\hat{\mathbf{h}}$ disappears from the loss and cannot be trained. In particular, this would prevent the model from learning to predict the size of the target graph. 
\end{remark}

The formal definition of PFGW and the proof of Proposition \ref{prop:pfgw} are provided in Appendix \ref{appendix:PFGW}.


\section{Any2Graph: a framework for end-to-end SGP}
\label{sec:framework}



\subsection{Any2Graph problem formulation}
\label{sec:erm}

The goal of Supervised Graph Prediction (SGP) is to learn a function $f:\bmX \to \bmG$ using the training samples $\{(x_i,g_i)\}_{i=1}^n \in (\bmX \times \bmG)^n$. In Any2Graph, we relax the output space and learn a function $\hat{f}:\bmX \to \bmYtilde$ that predicts a continuous graph $\hy := f(x)$ as defined in the previous section. Assuming $\hat{f}$ is a parametric model (in this work, a deep neural network) completely determined by a parameter $\theta$, the Any2Graph objective writes as the following empirical risk minimization problem:
\begin{equation}
    \min_{\theta}\quad \frac{1}{n} \sum_{i=1}^n \text{PMFGW}(\hat{f}_{\theta}(x_i),\padding(g_i)).
\end{equation}
At inference time, we recover a discrete prediction by a straightforward decoding $f(x) = \padding^{-1} \circ \thresholding (\hy) $, where $\thresholding$ is the thresholding operator with threshold $\nicefrac{1}{2}$ on the edges and nodes and $\padding^{-1}$ is the inverse padding defined in the previous section. In other words, the full decoding pipeline $\padding^{-1} \circ \thresholding$ removes the nodes $i$ (resp. edges $(i,j)$) whose predicted probability is smaller than $\nicefrac{1}{2}$  i.e. $\hat{h}_i<$\nicefrac{1}{2}$ $ (resp. $\hat{A}_{i,j}<$\nicefrac{1}{2}$ $)). Unlike surrogate regression methods, this decoding step is very efficient.



\begin{figure*}[t]
    \begin{center}
    \centerline{\includegraphics[width=1\columnwidth]{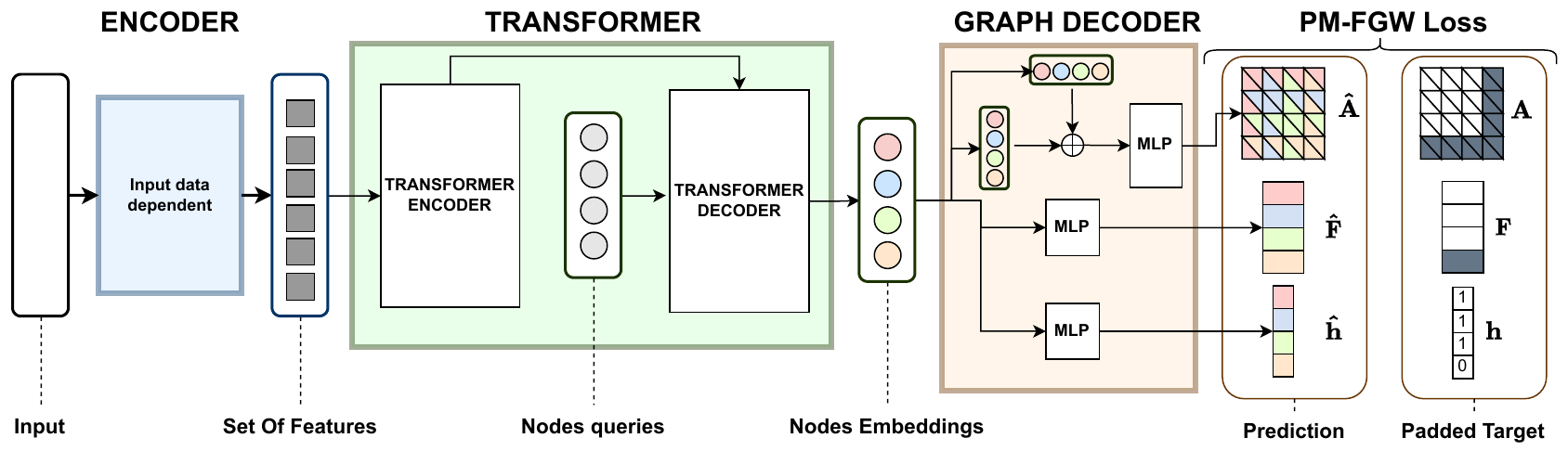}}
    \caption{Illustration of the architecture for a target graph of size 3 and $M = 4$.}
    \label{fig:architecture}
    \end{center}
    \vskip -0.2in
\end{figure*}

\subsection{Neural network architecture}\label{subsec:architecture}

The model $\hat{f}_\theta: \bmX \to \bmYtilde$ (left part of Figure \ref{fig:architecture}) is composed of three modules 
, namely the \textbf{encoder} that extracts features from the input, the \textbf{transformer} that convert these features into $M$
nodes embeddings, that are expected to capture both feature and structure information, and the
\textbf{graph decoder} that predicts the  properties of our output graph, i.e.,
$(\hat{\mathbf{h}},\hat{\mathbf{F}},\hat{\mathbf{A}})$. As we will discuss later, the proposed architecture draws heavily on that of Relationformer \citep{shit2022relationformer} since the latter has been shown to yield to state-of-the-art results on the Image2Graph task. 

\paragraph{Encoder} The encoder extracts $k$ feature vectors in $\mathbb{R}^{d_e}$ from the input. Note that $k$ is not fixed a priori and can depend on the input (for instance sequence length in case of text input). This is critical for encoding structures as complex as graphs and the subsequent transformer is particularly apt at treating this kind of representation. By properly designing the encoder, we can accommodate different
types of input data. In Appendix \ref{appendix:encoder}, we describe how to handle images, text, graphs, and vectors and provide general guidelines to address other input modalities.

\paragraph{Transformer} This module takes as input a set of feature vectors and outputs a fixed number of $M$ node embeddings. This resembles
the approach taken in machine translation, and we used an architecture based on a stack of transformer
encoder-decoders, akin to \citet{shit2022relationformer}. 

\paragraph{Graph decoder} This module decodes a graph from the set of node embeddings $\mathbf{Z}=[\mathbf{z}_1,\dots,\mathbf{z}_M]^T$ using the following equation: 
\begin{equation}
     \hat{h}_{i} = \sigma(\mathrm{MLP}_m( \mathbf{z}_i ) ),\quad  \hat{F}_{i} = \mathrm{MLP}_f( \mathbf{z}_i ), \quad \hat{A}_{i,j} = \sigma(\mathrm{MLP}_s^2( \mathrm{MLP}_s^1(\mathbf{z}_i) + \mathrm{MLP}_s^1(\mathbf{z}_j) ) )
\end{equation}
where $\sigma$ is the sigmoid function and $\mathrm{MLP}_m$, $\mathrm{MLP}_f$,
$\mathrm{MLP}_s^k$ are multi-layer perceptrons heads corresponding to each
component of the graph (mask, features, structure). The adjacency matrix is expected to be symmetric which motivate us to parameterize it as suggested by \citet{zaheer2017deep}.







\paragraph{Positioning with Relationformer}

As discussed above, the architecture is similar to the one proposed in Relationformer
\citep{shit2022relationformer}, with two modifications: 
(1) we use a symmetric operation with a sum to compute the adjacency matrix while Relationformer uses a concatenation that is not symmetric; (2) we investigate more general encoders to enable graph prediction from data other than images.
However, as stated in the previous section, the main originality of our framework lies in the design of the PMFGW loss.
Interestingly Relationformer uses a loss that presents similarities with FGW but where
the matching is done on the node features only, before computing a quadratic-linear loss similar to PMFGW. In other words, they solve a bi-level optimization problem, where the plan is computed on only part of the information, leading to potentially suboptimal results on heterophilic graphs as demonstrated in the next section.

\section{Numerical experiments}\label{sec:xp}

This section is organized as follows. First, we describe the experimental
setting (\ref{sec:setting}) including baselines. Next, we showcase the state-of-the-art performances of
Any2Graph for a wide range of metrics and datasets (\ref{sec:quantitative}).
Finally, we provide an empirical analysis of the key hyperparameters of Any2Graph
(\ref{sec:analysis}). 
The code for Any2Graph and all the experiments will be
released on GitHub.

\subsection{Experimental setting \label{sec:setting}}
 \paragraph{Datasets} 
We consider 5 datasets thus covering a wide spectrum of different input modalities, graph types, and sizes. The first one, \textit{Coloring}, is a new synthetic dataset that we proposed, inspired by the four-color theorem. The input is a noisy image partitioned into regions of colors and the goal is to predict the graph representing the regions as nodes (4 color classes) and their connectivity in the image. An example is provided in Figure \ref{fig:benchmark} and more details are in Appendix
\ref{appendix:coloring}. Then, we consider four real-world benchmarks. \textit{Toulouse} \citep{belli2019image} is Sat2Graph datasets where the goal is to extract the road network from binarized satellite images of a city. \textit{USCities} is also a Sat2Graph dataset but features larger and more convoluted graphs. 
Note that we leave aside the more complex RGB version of \textit{USCities} as it was shown to require complex multi-level attention architecture \citep{shit2022relationformer}, which is beyond the scope of this paper. 
Finally, following \citet{ucak2023reconstruction}, we address the Fingerprint2Graph task where the goal is to reconstruct a molecule from its fingerprint representation (list of tokens). We consider two widely different datasets for this tasks: \textit{QM9} \citep{wu2018moleculenet}, a scarce dataset of tiny molecules (up to 9 nodes) and \textit{GBD13}\citet{blum2009970}, a large dataset \footnote{For computational purposes we use the 'ABCDEFGH' subset  (more than 1.3 millions molecules) .} featuring molecules with up to 13 heavy atoms. 
Additional details concerning the datasets (e.g. dataset size, number of edges, number of nodes) are provided in Appendix \ref{appendix:datasets}. 

\paragraph{Compared methods} 
We compare Any2Graph, to our direct end-to-end competitor Relationformer
\citep{shit2022relationformer} that has shown to be the state-of-the-art method for Image2Graph. For a fair comparison, we use the same architecture (presented in Figure \ref{fig:architecture}) for both approaches so that the only difference is the loss. We conjecture that Any2Graph and Relationformer might benefit from feature diffusion, that is replacing the node feature vector $\mathbf{F}$ by the concatenation $[\mathbf{F},\mathbf{A}\mathbf{F}]$ before training. We denote by ``+FD'' the addition of feature diffusion before training. Moreover, we also compare with a surrogate regression approach (FGW-Bary) based on FGW barycenters \citep{brogat2022learning}. We test both the end-to-end parametric variant, FGWBary-NN,  where weight functions $\alpha_k$, as well as $K=10$ templates, are learned by a neural network and the non-parametric variant, FGWBary-ILE, where the templates are training samples and $\alpha_k$ are
learned by sketched kernel ridge regression \citep{yun_yang_randomized_2017, ahmad_fast_2023} with gaussian kernel. 
Both have been implemented using the codes provided by \citet{brogat2022learning}, modified to incorporate sketching. 
Hyperparameters regarding architectures and optimization are provided in Appendix \ref{appendix:hyperparameters}. 


\paragraph{Performance metrics} The heterogeneity of our datasets, calls for task-agnostic metrics focusing on different fine-grained levels of the graph. At the graph level, we report the PMFGW loss between continuous prediction
$\hy$ and padded target  $\mathcal{P}(g)$ and the graph edit distance
\citet{gao2010survey} between predicted graph $\padding^{-1} \thresholding \hy$
and target $g$. We also report the Graph Isomorphism Accuracy (\textsc{GI Acc}),
a metric that computes the proportion of perfectly predicted graphs. At the edge level, we
treat the adjacency matrix prediction as a binary prediction problem and report
the Precision and Recall. Finally, at the node level, we report \textsc{node}, the fraction of well-predicted node features, and \textsc{size} a metric reporting the accuracy of the predicted number of nodes. See Appendix
\ref{appendix:metrics} for more details.

\subsection{Comparison with existing SGP methods on diverse modalities\label{sec:quantitative}}

\begin{table*}
\caption{Graph level, edge level, and node level metrics reported on test for the different models and datasets. $^*$ denotes methods that use the actual size of the graph at inference time, hence the performance reported is a non-realistic upper bound. We where not able to train FGWBary on all dataset due to the prohibitive cost of barycenter computations. \textsc{n.a.} stands for not applicable.}
\label{tab:bigtable}
\vspace{-1mm}
\begin{center}
\begin{small}
\begin{sc}
\resizebox{\columnwidth}{!}{%
\begin{tabular}{l|l||crc|cc|cc}
\toprule
\multirow{2}{*}{Dataset} & \multirow{2}{*}{Model} & \multicolumn{3}{c}{Graph Level} & \multicolumn{2}{c}{Edge Level} & \multicolumn{2}{c}{Node Level Acc.} \\ 
 &  & Edit Dist. $\downarrow$ & GI Acc. $\uparrow$ & PMFGW $\downarrow$ & Prec. $\uparrow$ &  Rec.
 $\uparrow$ & Node  $\uparrow$ & Size  $\uparrow$ \\ \midrule
 \multirow{4}{*}{Coloring} & FGWBary-NN$^*$ & 6.73 & 1.00 &  0.91  & 75.19 & 84.99  & 77.58 & n.a. \\
 & FGWBary-ILE$^*$ & 7.60  & 0.90 & 0.93 & 72.17  & 83.81 & 79.15 & n.a. \\
  & Relationformer  & 5.47 & 18.14 & 0.32 & 80.39 & 86.34  & 92.68 & 99.32\\ 
  & Any2Graph & \textbf{0.20} & \textbf{85.20} & \textbf{0.03} & \textbf{99.15} &\textbf{99.37} & \textbf{99.95 }& \textbf{99.50} \\ 
  \midrule
  \multirow{4}{*}{Toulouse} & FGWBary-NN$^*$ & 8.11 & 0.00 & 1.15 & 84.09 & 79.68 & 10.10 & n.a.\\
  & FGWBary-ILE$^*$ & 9.00 & 0.00 &  1.21 & 72.52 & 56.30 & 1.62 & n.a.\\
 & Relationformer & \textbf{0.13} & 93.28 & \textbf{0.02} & 99.25 & 99.24 & 99.25 & 98.30 \\ 
 & Any2Graph & \textbf{0.13} & \textbf{93.62} & \textbf{0.02 }& \textbf{99.34} &\textbf{ 99.26} & \textbf{99.39 } & \textbf{98.81 }  \\ \midrule
   \multirow{2}{*}{USCities} 
   & Relationformer & 2.09 & 55.00 & 0.13 & \textbf{92.96} & 87.98 & 95.18 & \textbf{79.80 }\\ 
 & Any2Graph & \textbf{1.86} & \textbf{58.10} & \textbf{0.12 }& 92.91 &\textbf{ 90.85} & \textbf{95.70 } & 78.95   \\ \midrule
 \multirow{7}{*}{QM9} 
  & FGWBary-NN$^*$ & 5.55 & 1.00 & 0.96 & 87.81 & 70.78  & 78.62 & n.a. \\ 
  & FGWBary-ILE$^*$ & 3.54 & 7.10 & 0.59 & 80.40 & 75.14  & 91.47 & n.a.\\
  & FGWBary-ILE$^*$ + FD & 2.84 & 28.95 & 0.28 & 82.96 & 79.76  & 92.99 & n.a.\\
  & Relationformer  & 9.15 & 0.05 & 0.48 & 21.42 & 4.77 & 99.28 & 91.80  \\
  & Relationformer + FD  & 3.80 & 9.95 & 0.22 & 86.07 & 73.31 & 99.34  & \textbf{96.0}    \\ 
  & Any2Graph & 3.44 & 7.50 & 0.21 & 86.21 & 77.27 & 99.26 & 93.65   \\ 
  & Any2Graph + FD  & \textbf{2.13} & \textbf{29.85} & \textbf{0.14} & \textbf{90.19} & \textbf{88.08} & \textbf{99.77} & 95.45    \\ 
 \midrule
 \multirow{4}{*}{GDB13} 
 & Relationformer & 11.40 & 0.00 & 0.43 & 81.96 & 31.49 & 97.77 & 97.45  \\
 & Relationformer + FD & 8.83 & 0.01 & 0.29 & 84.14 & 55.89 & 97.57  & \textbf{98.65}   \\ 
 & Any2Graph & 7.45 & 0.05 & 0.22 & 87.20 & 60.41 & 99.41 & 96.15 \\ 
  & Any2Graph + FD & \textbf{3.63} & \textbf{16.25} & \textbf{0.11} & \textbf{90.83} & \textbf{84.86}  & \textbf{99.80}  & 98.15  \\ 
\bottomrule
\end{tabular}
}
\end{sc}
\end{small}
\end{center}
\vspace{-7mm}
\end{table*}

\paragraph{Prediction Performances} 
Table \ref{tab:bigtable} shows the performances of the different methods on the
five datasets. First, we observe that Any2Graph achieves state-of-the-art
performances for all datasets and graph level metrics. On the Sat2Graph tasks
(\textit{Toulouse} and \textit{USCities}) we note that Relationformer performs
very close to Any2Graph. In fact, the features (2D positions) are enough to uniquely identify the nodes, making Relationformer's Hungarian matching sufficient. Moreover, both methods highly benefit from feature diffusion on Fingerprint2Graph tasks which we discuss further in Appendix \ref{appendix:moreexp}. Note that both barycenter methods struggle on \textit{Toulouse}, possibly due to a lack of expressivity. On \textit{QM9} on the other hand, FGW-Bary-ILE performs close to Any2Graph but is still outperformed, even more so if we add feature diffusion. 
It should be stressed that FGW-Bary-ILE is placed here under the most favorable conditions possible, in particular with the use of a SOTA kernel and being given the ground truth number of nodes.
Finally, on \textit{GDB13}, the most challenging dataset, Any2Graph strongly outperforms all competitors. 
To summarize this part, we observe that Any2Graph and its variant Any2Graph+FD are consistently better by a large margin on three out of five benchmarks and tied for first place with RelationFormer on the remaining ones.

\paragraph{Computational Performances} 
\begin{wraptable}{r}{6.5cm}
    \vspace{-4mm}
    \caption{Computational speed for the different methods in terms of graphs per second on \textit{QM9}. Training performance is not provided for FGWBary-ILE as the closed-form expression is computed at once on CPU.}
    \label{tab:computationnal_cost}
    \begin{center}
    \begin{small}
    \begin{sc}
    \begin{tabular}{lcccr}
    \toprule
    Method & Train. & Pred.   \\
    \midrule
    FGWBary-ILE ($K$=$25$)  & n.a. & 1  \\
    FGWBary-NN ($K$=$10$) & 10 & 10 \\
    Relationformer  & 2k & 10k \\
    Any2Graph  & 1k & 10k \\
    \bottomrule
    \end{tabular}
    \end{sc}
    \end{small}
    \end{center}
\vspace{-5mm}
\end{wraptable}

Table \ref{tab:computationnal_cost} shows the cost of training and inference of all methods, expressed as the number of graphs processed per second. All values are computed on NVIDIA V100/Intel Xeon E5-2660.
The heavy cost of barycenters computation in FGWBary makes it several orders of magnitude slower than Any2Graph. We note that Relationformer is faster at learning time because it avoids the need to solve a QAP. Overall, our proposed approach offers the best of both worlds, achieving SOTA prediction performance at all levels of the graph, at a very low computational inference cost. 

\paragraph{Qualitative Performances} We display a sample of the graph predictions performed by the models in Figure \ref{fig:benchmark} (left) along with a larger collection in appendix \ref{appendix:qualitative}. We observe that not only Any2Graph can adapt to different input modalities, but also to the variety of target graphs at hand. For instance, \textit{Coloring} graphs tend to be denser, while Sat2Graph maps can contain multiple connected components and Fingerprint2Graph molecules exhibit unique substructures such as loops.  


\subsection{Empirical study of Any2Graph properties\label{sec:analysis}}

\paragraph{Sensitivity to dataset size}

In figure \ref{fig:benchmark}, we provide the test edit distance for different numbers of training samples in the explored datasets. Interestingly, we observe that a performance plateau is reached for all datasets. We also observe that \textit{Coloring}/\textit{Toulouse} are simpler than \textit{USCities}/\textit{QM9} (in terms of best edit distance) and illustrate the flexibility of \textit{Coloring} by bridging this complexity gap with the more challenging variation described in \ref{appendix:coloring}.

\begin{wrapfigure}{r}{0.43\textwidth}
  \begin{center}
  \vspace{-7mm}
\includegraphics[width=0.43\textwidth]{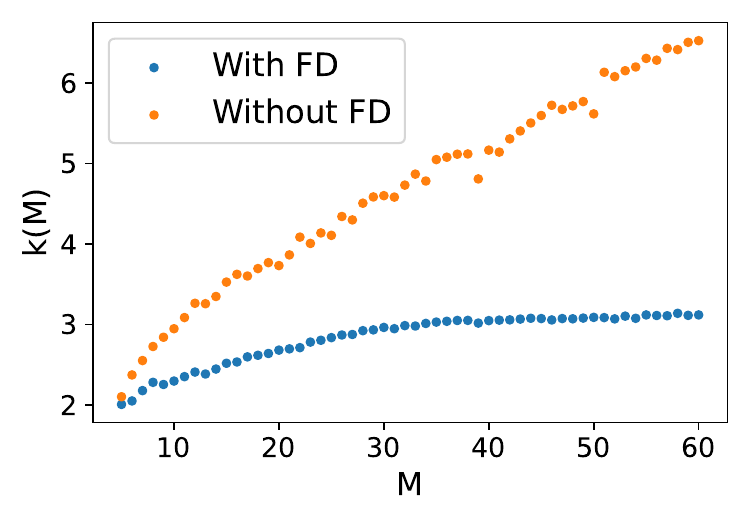}
\vspace{-1cm}
  \end{center}
  \caption{Average number of solver iterations required for computing PMFGW loss.
  \vspace{-.8cm}
  \label{fig:k(M)}}
\end{wrapfigure}

\paragraph{Scalability to larger graphs} As stated in property \ref{prop:scalability}, each iteration of the PMFGW solver scales with $\mathcal{O}(M^3)$. Denoting $k(M)$ the average number of iterations required for convergence, this means that the actual cost of computing the loss scales with $\mathcal{O}\left(k(M) M^3\right)$. We provide an empirical estimation of $k(M)$ in figure \ref{fig:k(M)} which we obtain by computing $\text{PMFGW}(\mathcal{P}(g_1),\mathcal{P}(g_2))$ for pairs of graphs $g_1,g_2$ sampled from the \textit{Coloring} dataset. We observe that $k(M)$ seems linear but can be made sub-linear using feature diffusion (FD). Still, the cubic cost prevents Any2Graph from scaling beyond a few tens of nodes. 

\begin{wrapfigure}{r}{0.35\textwidth}
  \begin{center}
  \vspace{-0.55cm}
\includegraphics[width=0.35\textwidth]{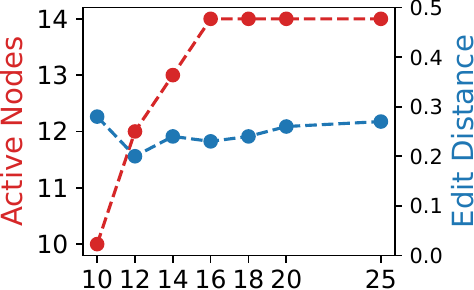}
  \end{center}
  \vspace{-0.25cm}
  \caption{Effect of $M$ on test edit distance and number of active nodes for \textit{Coloring}. \label{fig:Mmax} 
  }
\end{wrapfigure}

\paragraph{Choice of maximal graph size $\Mmax$} The default value of $\Mmax$ is the size of the largest graph in the train set. We explore whether or not overparametrizing the model with higher values bring substantial benefits. To this end, we train our model on \textit{Coloring} for $\Mmax$ between $10$ (default value) and $25$ and report the (test) edit distance. To quantify the effective number of nodes used by the model, we also record the number of active nodes, i.e., that are masked less than 99\% of the time (see Figure \ref{fig:Mmax}). Interestingly, we observe that performances are robust w.r.t. the choice of $\Mmax$ which can be explained by the number of useful nodes reaching a plateau. This suggests the model automatically learns the number of nodes it needs to achieve the required expressiveness.

\begin{wrapfigure}{r}{0.265\textwidth}
  \begin{center}
  \vspace{-1.2cm}
\includegraphics[width=0.265\textwidth]{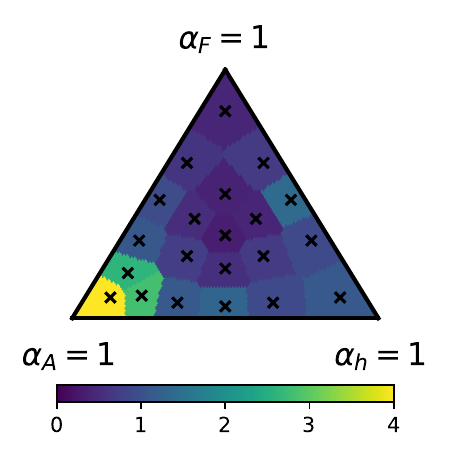}
\vspace{-0.5cm}
  \end{center}
  
  \caption{Effect of $\boldsymbol\alpha$ on the test edit distance.
  \vspace{-.8cm}
  \label{fig:alpha}}
\end{wrapfigure}

\paragraph{Sensitivity to the weights $\boldsymbol\alpha$} We investigate
the sensitivity of the proposed loss to the triplet of weight hyperparameters
$\boldsymbol\alpha$. To this end, we train our model on \textit{Coloring} for
different values $\boldsymbol\alpha$ on the simplex and report the (test) edit
distance on Figure \ref{fig:alpha}. We observe that the performance is optimal for uniform $\boldsymbol\alpha$ and robust
to other choices as long as there is not too much weight on
the structure loss term (corner $\alpha_\text{A}=1$). Indeed, the quadratic term of the loss being the hardest, putting too much emphasis on it might slow down the training. This explains the efficiency of feature diffusion, as it moves parts of the structure prediction to the linear term. Further evidence backing
this intuitive explanation is provided in \ref{appendix:curves}.

\begin{figure}[t]
    \centering
    \includegraphics[height= 5cm]{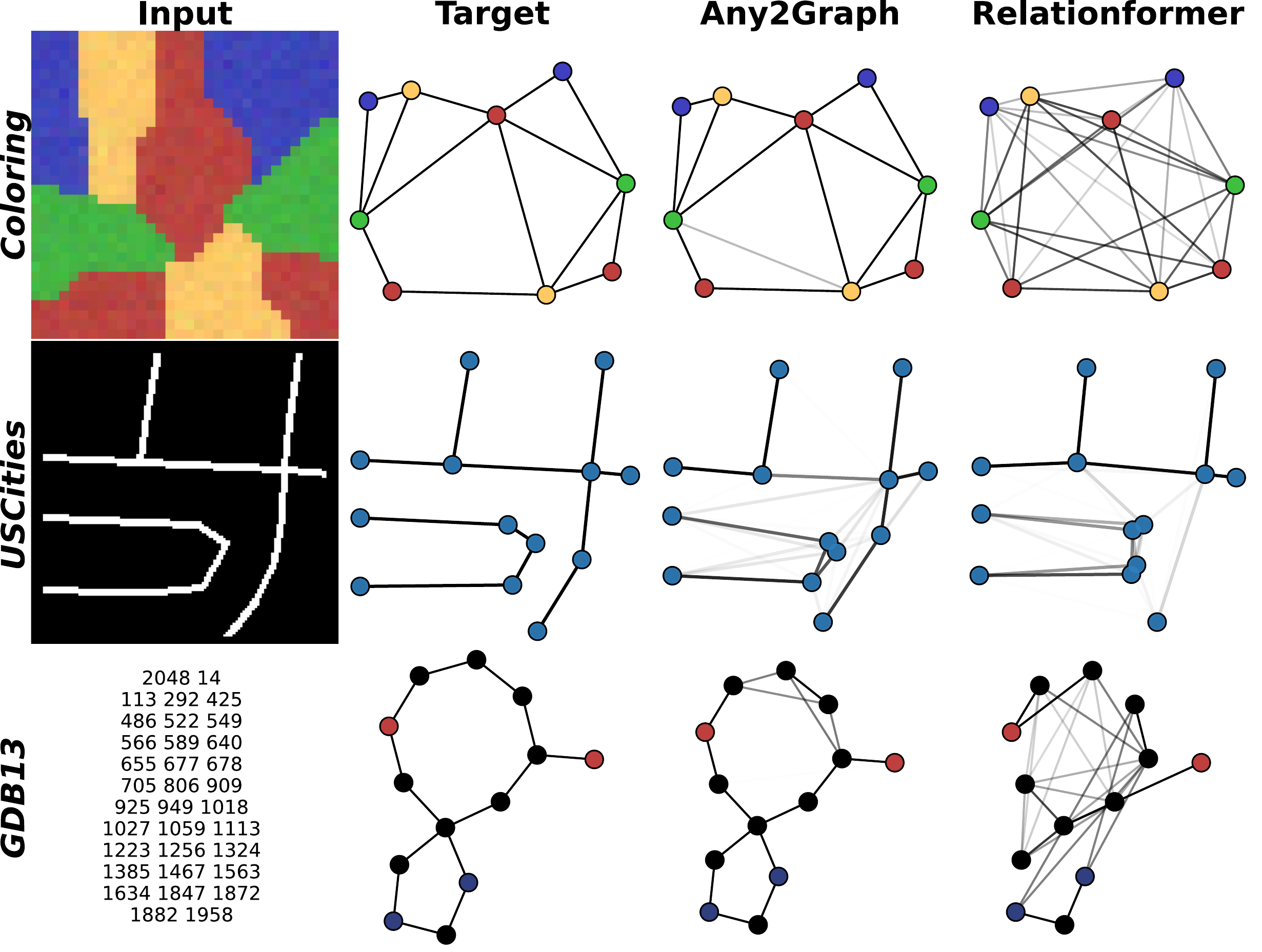}
    \hfill
    \includegraphics[height = 5cm]{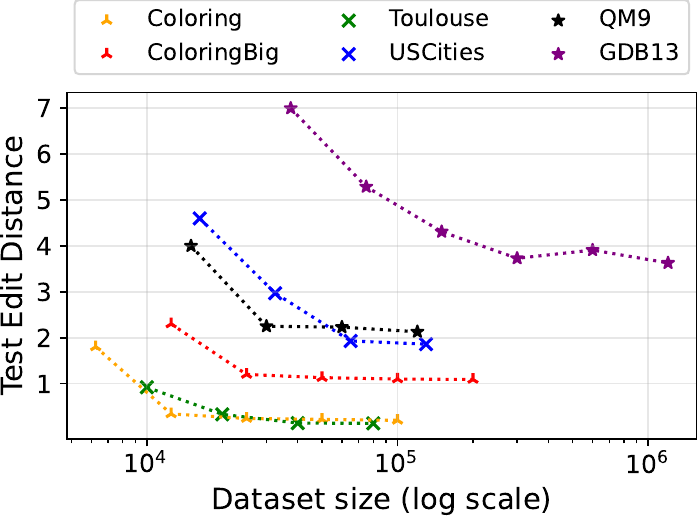}
    \caption{(Left): a sample of predictions made by Any2Graph and Relationformer for a given input and ground truth target. Many more are provided in \ref{appendix:qualitative}. (Right): we truncate the train datasets to provide an overview of Any2Graph training curves (test performances against train set size).  }
    \label{fig:benchmark}
      \vspace{-0.5cm}
\end{figure}

\section{Conclusion and limitations}

We present Any2Graph, a novel deep learning approach to Supervised Graph Prediction (SGP) leveraging an original asymmetric Partially-Masked Fused Gromov-Wasserstein loss. To the best of our knowledge, Any2Graph stands as the first end-to-end and versatile framework to consistently achieve state-of-the-art performances across a wide range of graph prediction tasks and input modalities. Notably, we obtain excellent results in both accuracy and computational efficiency. Finally, we illustrate the adaptability of the proposed loss (using diffused features), and the robustness of the method to sensitive parameters such as maximum graph size and weights of the different terms in PMFGW.

The main limitation of Any2Graph is its scalability to graphs of larger size. Considering the tasks at hand, in this paper, we limit ourselves to relatively small graphs of up to 20 nodes.\\
For the future, we envision two working directions to address this issue. First, given the promising results with feature diffusion, we plan to introduce more tools from spectral graph theory \citep{chung1997spectral,gasteiger2019diffusion,barbe2020graph} in Any2Graph, e.g., using diffusion on the adjacency matrix to capture higher-order interactions that may occur in large graphs. More generally, this extension could be useful even for smaller graphs. Secondly, we expect that the solver computing the optimal transport plan can be accelerated using approximation. In particular, the entropic regularization \citep{peyre2016gromov} might unlock the possibility of fully parallelizing the optimization on a GPU while low-rank OT solvers \citet{scetbon2022linear} could allow Any2Graph to scale to large output graphs.

\section*{Acknowledgements}

The authors thanks Alexis Thual and Quang Huy TRAN for providing their insights and code about the Fused Unbalanced Gromov Wasserstein metric.
This work was funded by the French National Research Agency (ANR) through ANR-18-CE23-0014 and ANR-23-ERCC-0006-01. It also received the support of Télécom Paris
Research Chair DSAIDIS. The first and second authors respectively received PhD scholarships from Institut Polytechnique de Paris and Hi!Paris.

\bibliographystyle{apalike}
\bibliography{references}


\newpage
\appendix

\section{Illustration of PMFGW On A Toy Example \label{appendix:losstoy}}

On the one hand, we consider a target graph of size 2, $\mathbf{g} = (\mathbf{F}_2,\mathbf{A}_2)$ where 

$$ \mathbf{F}_2 = \begin{pmatrix}
    \mathbf{f}_1 \\
    \mathbf{f}_2
\end{pmatrix}; \mathbf{A}_2 = \begin{pmatrix}
    0 & 1 \\
    1 & 0
\end{pmatrix} $$

for some node features $\mathbf{f}_1$ and $\mathbf{f}_2$. For $\Mmax = 3$ the padded target is $\mathcal{P}(\mathbf{g}) = (\mathbf{h},\mathbf{F},\mathbf{A})$ where

$$ \bh = \begin{pmatrix}
    1 \\
    1 \\
    0
\end{pmatrix}; \bF = \begin{pmatrix}
    \mathbf{f}_1 \\
    \mathbf{f}_2 \\
    -
\end{pmatrix}; \bA = \begin{pmatrix}
    0 & 1 & - \\
    1 & 0 & - \\
    - & - & -
\end{pmatrix}.$$

On the other hand, we consider a predicted graph $\hat{\mathbf{y}}_{a,h} = (\hat{\mathbf{h}},\hat{\mathbf{F}},\hat{\mathbf{A}})$ that has the form 

$$ \hat{\mathbf{h}} = \begin{pmatrix}
    1 \\
    h \\
    1-h 
\end{pmatrix}; \hat{\mathbf{F}} = \begin{pmatrix}
    \mathbf{f}_1 \\
    \mathbf{f}_2 \\
    \mathbf{f}_2
\end{pmatrix}; \hat{\mathbf{A}} = \begin{pmatrix}
    0 & a & 1-a \\
    a & 0 & 0 \\
    1-a & 0 & 0
\end{pmatrix} $$

for some $a, h \in [0,1]$. 
The loss between the prediction and the (padded) target is
$$ \mathcal{L}_\text{train}(a,h) = \text{PMFGW}(\hat{\mathbf{y}}_{a,h},\mathcal{P}_3(\mathbf{g})) $$
We are interested in the landscape of this loss. First of all, it appears that $\hat{\mathbf{y}}_{1,1}$ and $\hat{\mathbf{y}}_{0,0}$ and $\mathcal{P}(\mathbf{g})$ are isomorphic, thus we get two global minima $ \mathcal{L}_\text{train}(1,1) = \mathcal{L}_\text{train}(0,0) = 0$. Going into greater detail, it can be shown that for $\ellh(a,b) = \ellA(a,b) = (a-b)^2$ we have the following expression

$$\mathcal{L}_\text{train}(a,h) = \min\left((1-a)^2 + \frac{2}{3}(1-h)^2;  a^2 + \frac{2}{3}h^2  \right)$$

and the optimal transport plan is the permutation $(1, 2, 3)$ when $(1-a)^2 + \frac{2}{3}(1-h)^2 < a^2 + \frac{2}{3}h^2$ and $(1, 3, 2)$ otherwise. In this toy example, the optimal transport plan is always a permutation.


At inference time, we could similarly be interested in the edit distance between the (discrete) prediction and the target

$$\mathcal{L}_\text{eval}(a,h) = \text{ED}(\mathcal{P}^{-1}_3\mathcal{T}(\hat{\mathbf{y}}_{a,h}), \mathbf{g}).$$

Once again, the expression can be computed explicitly

$$\mathcal{L}_\text{eval}(a,h) = \mathds{1}[ a<0.5 \ \text{and} \ h>0.5] + \mathds{1}[ a>0.5 \ \text{and} \ h<0.5]$$

We provide in Figure \ref{fig:toyexample} an illustration of the edit distance and the proposed loss that is clearly a continuous and smoothed version of the edit distance which allows for learning the NN parameters.

\begin{figure}[ht!]
    \centering
    \includegraphics[width=5cm]{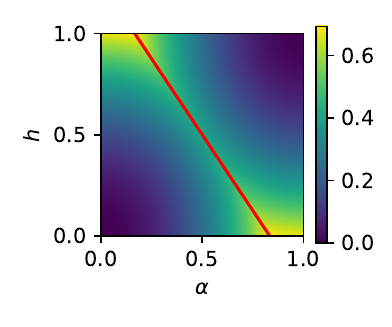}
    \includegraphics[width=5cm]{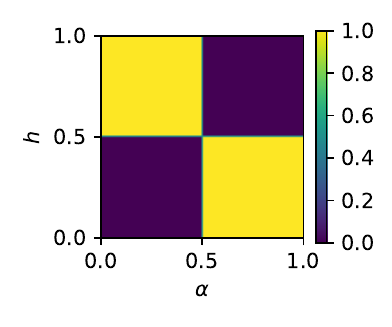}
    \caption{Heatmap of $\mathcal{L}_\text{train}$ (left) and $\mathcal{L}_\text{eval}$ (right). The red line represents the transition between the regime where the optimal transport plan is the permutation $(1,2,3)$ and that where it is $(1,3,2)$. In both cases, the optimal transport plan is a permutation. \label{fig:toyexample} }
\end{figure}

\section{Formal Statements And Proofs \label{appendix:proofs}}

In this section, we write

$$\text{PMFGW}(\hy,\mathcal{P}(g)) = \min_{\bT \in \pi_M} \hspace{-0.2cm}
     \quad  \sum_{i,j} T_{i,j} \ellh(\hat{h}_{i},h_{j})    
     +   \sum_{i,j}T_{i,j}\ellF(\hat{\mathbf{f}}_{i},\mathbf{f}_{j}) h_{j} 
     +   \sum_{i,j,k,l} T_{i,j}T_{k,l}\ellA(\hat{A}_{i,k},A_{j,l}) h_{j} h_{l},$$

meaning that we absorb the normalization factors in the ground losses to lighten the notation.

Alternatively, we also consider the matrix formulation:

$$\text{PMFGW}(\hy,\mathcal{P}(g)) = \min_{\bT \in \pi_M} 
     \langle \bT, \mathbf{C}  \rangle + \langle \bT, \mathbf{L} \otimes \bT \rangle,$$

where $C_{i,j} = \ellh(\hat{h}_{i},h_{j}) + \ellF(\hat{\mathbf{f}}_{i},\mathbf{f}_{j})h_{j}$, $L_{i,j,k,l} = \ellA(\hat{A}_{i,k},A_{j,l}) h_{j} h_{l}$ and $\otimes$ is the tensor/matrix product.

\subsection{PMFGW fast computation \label{appendix:fastcompute}}

The following results generalize the Proposition 1 of \citet{peyre2016gromov} so that it can be applied to the computation of PMFGW.

\begin{proposition}
     Assuming that the ground loss than can decomposed as $\ell(a,b) = f_1(a) + f_2(b) - h_1(a)h_2(b)$, for any transport plan $\mathbf{T}\in \mathbb{R} ^{n \times m}$ and matrices $\mathbf{A},\mathbf{W} \in \mathbb{R}^{n\times n}$ and $\mathbf{A}',\mathbf{W}' \in \mathbb{R}^{m\times m} $, then the tensor product of the form
    $$(\mathbf{L}\otimes \mathbf{T})_{i,i'} = \sum_{j,j'}  T_{j,j'}\ell(A_{i,j},A_{i',j'}') W_{i,j}W_{i',j'}'$$
    can be computed as 
    \begin{align*}
        \mathbf{L}\otimes \mathbf{T} &= \mathbf{U}_1 \mathbf{T} \mathbf{W}'^T 
        + \mathbf{W} \mathbf{T} \mathbf{U}_2^T 
        - \mathbf{V}_1 \mathbf{T} \mathbf{V}_2^T ,
    \end{align*}
    where $\mathbf{U}_1 = f_1(\mathbf{A}) \cdot \mathbf{W}$, $\mathbf{U}_2 = f_2(\mathbf{A}') \cdot \mathbf{W}'$, $\mathbf{V}_1 = h_1(\mathbf{A}) \cdot \mathbf{W}$, $\mathbf{V}_2 = h_2(\mathbf{A}') \cdot \mathbf{W}'$ and $[\cdot]$ is the point-wise multiplication.

    \begin{proof}
        Thanks to the decomposition assumption the tensor product can be decomposed into 3 terms: 
        \begin{align*}
            &(\mathbf{L}\otimes \mathbf{T})_{i,i'} = \sum_{j,j'} T_{j,j'}f_1(A_{i,j}) W_{i,j}W_{i',j'}'+ \sum_{j,j'} T_{j,j'}f_2(A_{i',j'}') W_{i,j}W_{i',j'}'- \sum_{j,j'} T_{j,j'}h_1(A_{i,j})h_2(A_{i',j'}') W_{i,j}W_{i',j'}'.\\
             &= \sum_{j} f_1(A_{i,j}) W_{i,j}  \sum_{j'} T_{j,j'}W_{i',j'}'+ \sum_{j'} f_2(A_{i',j'}') W_{i',j'}' \sum_{j} T_{j,j'} W_{i,j} - \sum_{j} h_1(A_{i,j}) W_{i,j} \sum_{j'} T_{j,j'}h_2(A_{i',j'}') W_{i',j'}'.\\
        \end{align*}
        Introducing $\mathbf{U}_1, \mathbf{U}_2, \mathbf{V}_1$ and $\mathbf{U}_2$ as defined above, we write:
        \begin{align*}
            (\mathbf{L}\otimes \mathbf{T})_{i,i'} &= \sum_{j} (U_1)_{i,j}  \sum_{j'} T_{j,j'}W_{i',j'}'+ \sum_{j'} (U_2)_{i',j'} \sum_{j} T_{j,j'} W_{i,j} - \sum_{j} (V_1)_{i,j}  \sum_{j'} T_{j,j'}(V_2)_{i',j'},\\
            &= \sum_{j} (U_1)_{i,j}  (TW'^T)_{j,i'}+ \sum_{j'} (U_2)_{i',j'}  (WT)_{i,j'} - \sum_{j} (V_1)_{i,j}  (T_{j,j'}V_2^T)_{j,i'},\\
        \end{align*}
        which concludes that $\mathbf{L}\otimes \mathbf{T} = \mathbf{U}_1 \mathbf{T} \mathbf{W}'^T
        + \mathbf{W} \mathbf{T} \mathbf{U}_2^T
        - \mathbf{V}_1 \mathbf{T} \mathbf{V}_2^T $.
    \end{proof}
    \begin{remark}[Computational cost]
    $\mathbf{U}_1,\mathbf{U}_2,\mathbf{V}_1,\mathbf{V}_2$ can be
    pre-computed for a cost of $\mathcal{O}(n^{2} + m^{2})$, after which
    $\mathbf{L}\otimes \mathbf{T}$ can be computed (for any $\mathbf{T}$) at a
    cost of $\mathcal{O}(mn^{2} + nm^{2})$.
    \end{remark}
    \begin{remark}[Kullback-Leibler divergence decomposition]
        The Kullback-Leibler divergence  $KL(p,q) = q \log \frac{q}{p} + (1-q) \log \frac{(1-q)}{(1-p)}$, which we use as ground loss in our experiments satisfies the required decomposition given $f_1(p)= -\log(p)$, $f_2(q)=q\log(q)+(1-q)\log(1-q)$, $h_1(p)=\log(\frac{1-p}{p})$ and, $h_2(q)=1-q$
    \end{remark}
    \begin{remark}
        The tensor product that appears in PMFGW is a special case of this theorem that corresponds to $n=m=M$, $W_{i,j}=1$ and $W'_{i',j'} = h_{i'}h_{j'}$. Thus, proposition \ref{prop:scalability} is a direct corollary.
    \end{remark}
\end{proposition}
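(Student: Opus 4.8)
The statement is the factorized form of the weighted tensor–matrix product $\mathbf{L}\otimes\mathbf{T}$, a weighted generalization of Proposition~1 of \citet{peyre2016gromov}. I would prove it by the same mechanism: substitute the separable decomposition of the ground loss into the defining double sum, then use separability to collapse the $(j,j')$ summations into three plain matrix products. The identity is purely algebraic, so no fixed-point or optimization argument is involved.

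First I would expand: plugging $\ell(A_{i,j},A'_{i',j'}) = f_1(A_{i,j}) + f_2(A'_{i',j'}) - h_1(A_{i,j})h_2(A'_{i',j'})$ into $(\mathbf{L}\otimes\mathbf{T})_{i,i'} = \sum_{j,j'} T_{j,j'}\,\ell(A_{i,j},A'_{i',j'})\,W_{i,j}W'_{i',j'}$ splits it by linearity into three double sums $S_1+S_2-S_3$, one per term. Next I would separate variables: the role of the decomposition hypothesis is that each $S_r$ has a summand that is a product of an $(i,j)$-quantity, an $(i',j')$-quantity, and the coupling $T_{j,j'}$, so the inner sum factors. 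For $S_1$ one gets $S_1 = \sum_j \big(f_1(A_{i,j})W_{i,j}\big)\big(\sum_{j'} T_{j,j'}W'_{i',j'}\big)$; identifying $(\mathbf{U}_1)_{i,j}=f_1(A_{i,j})W_{i,j}$ and $\sum_{j'}T_{j,j'}W'_{i',j'}=(\mathbf{T}\mathbf{W}'^\top)_{j,i'}$ yields $S_1=(\mathbf{U}_1\mathbf{T}\mathbf{W}'^\top)_{i,i'}$; pulling the $(i',j')$-factor out first in $S_2$ gives $S_2=(\mathbf{W}\mathbf{T}\mathbf{U}_2^\top)_{i,i'}$, and the cross term gives $S_3=(\mathbf{V}_1\mathbf{T}\mathbf{V}_2^\top)_{i,i'}$. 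Reassembling the three contributions gives exactly $\mathbf{L}\otimes\mathbf{T}=\mathbf{U}_1\mathbf{T}\mathbf{W}'^\top+\mathbf{W}\mathbf{T}\mathbf{U}_2^\top-\mathbf{V}_1\mathbf{T}\mathbf{V}_2^\top$.

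For the complexity remark and the corollary (Proposition~\ref{prop:scalability}) I would then observe that $\mathbf{U}_1,\mathbf{U}_2,\mathbf{V}_1,\mathbf{V}_2$ are Hadamard products of already-available matrices, hence $\mathcal{O}(n^2+m^2)$ to form, and that each of the three triple products with the $n\times m$ plan sandwiched in the middle is $\mathcal{O}(mn^2+nm^2)$. Specializing to PMFGW ($n=m=M$, $W\equiv 1$, $W'_{i',j'}=h_{i'}h_{j'}$, and verifying the cross-entropy/KL ground loss meets the decomposition hypothesis with $f_1(p)=-\log p$, $f_2(q)=q\log q+(1-q)\log(1-q)$, $h_1(p)=\log\frac{1-p}{p}$, $h_2(q)=1-q$), the quadratic part of the objective costs $\mathcal{O}(M^3)$, while the linear term $\langle\mathbf{T},\mathbf{C}\rangle$ and the final contraction $\langle\mathbf{T},\mathbf{L}\otimes\mathbf{T}\rangle$ cost $\mathcal{O}(M^2)$, which gives Proposition~\ref{prop:scalability}.

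\textbf{Main obstacle.} There is no real difficulty — the whole argument is index bookkeeping. The one thing needing care is tracking where the free indices $i$ and $i'$ land after the inner sums are separated, so that the regrouped expressions are genuinely conformable matrix products (which is exactly why $\mathbf{W}'$, $\mathbf{U}_2$ and $\mathbf{V}_2$ appear transposed), and being explicit that the product form $h_1(a)h_2(b)$ — rather than a generic $g(a,b)$ — is what licenses the variable separation in the key step.
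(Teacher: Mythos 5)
Your proposal is correct and follows exactly the same route as the paper's proof: substitute the separable decomposition $\ell(a,b)=f_1(a)+f_2(b)-h_1(a)h_2(b)$ into the double sum, factor each of the three resulting terms across the $j$ and $j'$ summations, and identify the regrouped expressions as the matrix products $\mathbf{U}_1\mathbf{T}\mathbf{W}'^\top$, $\mathbf{W}\mathbf{T}\mathbf{U}_2^\top$ and $\mathbf{V}_1\mathbf{T}\mathbf{V}_2^\top$. The complexity accounting and the specialization to PMFGW also match the paper's remarks, so there is nothing to add.
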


\subsection{PMFGW divergence properties \label{appendix:divergence}}

First, we provide below a more detailed version of Proposition \ref{prop:divergence1}

\begin{proposition}[GI Invariance]
For any $m \leq M$, $\hy,\hy' \in \bmYtilde$ and $g,g'\in \bmG_m$, we have that 
\begin{equation*}
    \hy \sim \hy', g \sim g' \implies \text{PMFGW}(\hy,\mathcal{P}(g)) = \text{PMFGW}(\hy',\mathcal{P}(g')).
\end{equation*}

\end{proposition}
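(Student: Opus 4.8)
The plan is to show that the PMFGW objective is unchanged when either argument is replaced by an isomorphic graph, by tracking how a permutation of the nodes can be absorbed into the optimization variable $\mathbf{T} \in \pi_M$. The key observation is that the set of transport plans $\pi_M$ is invariant under left and right multiplication by permutation matrices: if $\mathbf{T} \in \pi_M$ and $\mathbf{P}, \mathbf{Q} \in \sigma_M \subset \pi_M$, then $\mathbf{P}\mathbf{T}\mathbf{Q} \in \pi_M$, and the map $\mathbf{T} \mapsto \mathbf{P}\mathbf{T}\mathbf{Q}$ is a bijection of $\pi_M$ onto itself. So it suffices to exhibit, for each isomorphism applied to an argument, a corresponding relabeling of the transport plan that leaves the three sums in \eqref{eq:pmfgw} pointwise invariant.

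First I would handle the target side. Suppose $g \sim g'$ with $g,g' \in \bmG_m$, so there is $\mathbf{P}_0 \in \sigma_m$ with $(\mathbf{F},\mathbf{A}) = (\mathbf{P}_0\mathbf{F}', \mathbf{P}_0\mathbf{A}'\mathbf{P}_0^T)$. The padding operator $\mathcal{P}$ commutes with permutations in the sense that $\mathcal{P}(g) = (\mathbf{P}\mathbf{h}', \mathbf{P}\mathbf{F}', \mathbf{P}\mathbf{A}'\mathbf{P}^T)$ where $\mathbf{P} = \mathrm{diag}(\mathbf{P}_0, \mathbf{I}_{M-m}) \in \sigma_M$ (the extra padding coordinates are zero and hence insensitive to which permutation fixes them). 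Substituting $h_j = (\mathbf{P}\mathbf{h}')_j$, $\mathbf{f}_j = (\mathbf{P}\mathbf{F}')_j$, $A_{j,l} = (\mathbf{P}\mathbf{A}'\mathbf{P}^T)_{j,l}$ into the objective and then performing the change of variable $\mathbf{T} \mapsto \mathbf{T}\mathbf{P}^T$ (a bijection of $\pi_M$), a direct index manipulation shows every term matches the objective for $(\hy, \mathcal{P}(g'))$. Hence the minima coincide.

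Next I would handle the prediction side. Suppose $\hy \sim \hy'$, meaning there is $\mathbf{Q}_0 \in \sigma_M$ with $\hat{\mathbf{h}} = \mathbf{Q}_0\hat{\mathbf{h}}'$, $\hat{\mathbf{F}} = \mathbf{Q}_0\hat{\mathbf{F}}'$, $\hat{\mathbf{A}} = \mathbf{Q}_0\hat{\mathbf{A}}'\mathbf{Q}_0^T$. (One subtlety to note: the graph isomorphism relation on $\bmYtilde$ uses a full $M\times M$ permutation, since continuous graphs carry no distinguished "real size"; this is consistent with the statement, which asserts invariance for arbitrary $\hy \sim \hy'$.) Substituting and applying the change of variable $\mathbf{T} \mapsto \mathbf{Q}_0^T\mathbf{T}$ again yields term-by-term equality, so the minimum over $\pi_M$ is unchanged. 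Combining the two sides gives the full statement: replacing $g$ by $g'$ and $\hy$ by $\hy'$ leaves PMFGW invariant.

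The routine part is the index bookkeeping verifying that each of the three sums (node-mask, masked-feature, masked-structure) is invariant under the simultaneous relabeling of graph data and transport plan; this is a mechanical check relying only on the identities $\sum_j (\mathbf{T}\mathbf{P}^T)_{i,j}\,\phi((\mathbf{P}\mathbf{v})_j) = \sum_{j'} T_{i,j'}\,\phi(v_{j'})$ and its quadratic analogue. The part that deserves the most care — and what I would flag as the main obstacle — is getting the reduction to $M\times M$ permutations right for the padded target: one must argue that the block-diagonal lift $\mathbf{P} = \mathrm{diag}(\mathbf{P}_0,\mathbf{I})$ genuinely realizes $\mathcal{P}(g)$ as a relabeling of $\mathcal{P}(g')$, which hinges on the padded coordinates of $\mathbf{h}, \mathbf{F}, \mathbf{A}$ being identically zero and therefore fixed by any permutation of the last $M-m$ indices, so that the particular choice of lift is immaterial. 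Everything else follows from the bijectivity of $\mathbf{T} \mapsto \mathbf{P}\mathbf{T}\mathbf{Q}$ on $\pi_M$.
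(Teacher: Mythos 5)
Your proposal is correct and follows essentially the same route as the paper's proof: both absorb the node permutations into the transport plan via the bijection $\mathbf{T}\mapsto\mathbf{P}^T\mathbf{T}\mathbf{Q}$ of $\pi_M$ and check term-by-term invariance of the three sums. The only (minor) difference is that you handle the two arguments separately and make explicit the block-diagonal lift $\mathrm{diag}(\mathbf{P}_0,\mathbf{I}_{M-m})$ showing $\mathcal{P}(g)\sim\mathcal{P}(g')$, a step the paper asserts without detail.
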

\begin{proof}
    We denote $\hy = (\hbh,\hbF,\hbA)$ and $ \mathcal{P}(g) = (\bh,\bF,\bA)$. Since $\hy$ and $\hy'$ are isomorphic, there exist a permutation $\bP \in \sigma_M$ such that $\hy' = (\bP \hbh, \bP \hbF, \bP \hbA \bP^T)$. Moreover, the fact that $g$ and $g'$ are isomorphic implies that $\mathcal{P}(g)$ and $\mathcal{P}(g')$ are isomorphic as well, thus there exist a permutation $\bQ \in \sigma_M$ such that $\mathcal{P}(g') = (\bQ \bh, \bQ \bF, \bQ \bA \bQ^T)$. Plugging into the PMFGW objective we get

\begin{align*}
     \text{PMFGW}(\hy',\mathcal{P}(g'))  = \min_{\mathbf{T} \in \pi_M} \hspace{-0.2cm}%
     \quad  &\sum_{i,j} T_{i,j} \ellh((\bP \hbh)_{i},(\bQ \bh)_{j})    
     +   \sum_{i,j}T_{i,j}\ellF((\bP \hbF)_{i},(\bQ \bF)_{j}) (\bQ \bh)_{j} 
     \\&+   \sum_{i,j,k,l} T_{i,j} T_{k,l}\ellA((\bP \hbA \bP^T)_{i,k},(\bQ \bA \bQ^T)_{j,l}) (\bQ \bh)_{j}  (\bQ \bh)_{l} \\
     =\min_{\mathbf{T} \in \pi_M} \hspace{-0.2cm}%
     \quad  &\sum_{i,j} (\bP^T \bT \bQ)_{i,j} \ellh( \hbh_{i},\bh_{j})    
     +  \sum_{i,j} (\bP^T \bT \bQ)_{i,j} \ellF(\hat{\mathbf{f}}_{i},\mathbf{f}_{j}) h_{j} 
     \\&+   \sum_{i,j,k,l} (\bP^T \bT \bQ)_{i,j} (\bP^T \bT \bQ)_{k,l}\ellA(\hat{A}_{i,k},A_{j,l}) h_{j} h_{l}.
\end{align*} 
    
Denoting $\Tilde{\bT} = \bP^T \bT \bQ$ we have that 

\begin{align*}
    \text{PMFGW}(\hy',\mathcal{P}(g')) = \min_{\Tilde{\bT} \in \pi_M} \hspace{-0.2cm}
     \quad  &\sum_{i,j} \Tilde{T}_{i,j} \ellh(\hat{h}_{i},h_{j})    
     +   \sum_{i,j}\Tilde{T}_{i,j}\ellF(\hat{\mathbf{f}}_{i},\mathbf{f}_{j}) h_{j} 
     \\ &+   \sum_{i,j,k,l} \Tilde{T}_{i,j}\Tilde{T}_{k,l}\ellA(\hat{A}_{i,k},A_{j,l}) h_{j} h_{l}
     \\ &= \text{PMFGW}(\hy,\mathcal{P}(g)).
\end{align*}
    
\end{proof}

We now provide a more detailed version of Proposition \ref{prop:divergence2}

\begin{definition}
    We say that $\ell: \hat{\mathcal{X}} \times \mathcal{X}  \mapsto \mathbb{R} $ is positive when for any $x,y \in \hat{\mathcal{X}} \times \mathcal{X}$, $\ell(x,y) \geq 0 $ with equality if and only if $x = y$.
\end{definition}

\begin{proposition}[Positivity] Let us assume that $\ellh: [0,1] \times \{0,1\}  \mapsto \mathbb{R} , \ellF: \mathbb{R} ^d \times \mathbb{R} ^d  \mapsto \mathbb{R} $ and $\ellA: [0,1] \times \{0,1\}  \mapsto \mathbb{R} $ are positive. Then we have that for any $\hy \in \bmYtilde, g \in \bmG$ :

\begin{itemize}
    \item i) $\text{PMFGW}(\hy,\mathcal{P}(g)) \geq 0$
    \item ii) There is equality if and only if $\hy \sim \mathcal{P}(g)$
    \item iii) In that case $\mathcal{P}^{-1}\mathcal{T}(\hy) \sim g$
\end{itemize}
\end{proposition}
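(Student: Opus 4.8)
The plan is to prove the three claims in order, exploiting the fact that, up to absorbing the (positive) normalization constants into the ground losses, the PMFGW objective is a sum of three nonnegative terms. For (i), I would observe that for any feasible transport plan $\mathbf{T} \in \pi_M$, every summand $T_{i,j}\ellh(\hat h_i,h_j)$, $T_{i,j}\ellF(\hat{\mathbf f}_i,\mathbf f_j)h_j$, and $T_{i,j}T_{k,l}\ellA(\hat A_{i,k},A_{j,l})h_jh_l$ is a product of nonnegative factors (the entries of $\mathbf{T}$ and of $\mathbf{h}$ lie in $[0,1]$, and the ground losses are positive hence nonnegative); therefore the objective is $\geq 0$ for every $\mathbf{T}$, and so is its minimum.

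For the ``if'' direction of (ii): if $\hy \sim \mathcal P(g)$, then by GI-invariance (Proposition \ref{prop:divergence1}, already proved) $\text{PMFGW}(\hy,\mathcal P(g)) = \text{PMFGW}(\mathcal P(g),\mathcal P(g))$, so it suffices to exhibit one plan achieving value $0$ when prediction and target coincide: take $\mathbf{T} = \mathbf{I}_M$. Then each term vanishes because $\ellh(h_i,h_i)=0$, $\ellF(\mathbf f_i,\mathbf f_i)=0$ and $\ellA(A_{i,k},A_{i,k})=0$ by positivity of the ground losses (here $h\in\{0,1\}$ is in the right domain). Combined with (i) this gives equality to $0$. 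For the ``only if'' direction, suppose $\text{PMFGW}(\hy,\mathcal P(g))=0$, and let $\mathbf{T}^\star$ be an optimal plan; since the objective is a sum of nonnegative terms, each must vanish. I would first pass to an extreme point of $\pi_M$ — this is the main subtlety: a priori $\mathbf{T}^\star$ need only be doubly stochastic, not a permutation. I would argue that the linear part of the objective (the first two terms, with cost matrix $\mathbf{C}$) attains its minimum at a vertex of the Birkhoff polytope, i.e. at some permutation $\mathbf{P}$, and that plugging $\mathbf{P}$ back in can only decrease (hence keep at $0$) the quadratic term as well — or, more carefully, I would use that $\mathbf{T}^\star$ being a minimizer with zero value forces $C_{i,j}=0$ on the support of $\mathbf{T}^\star$ and, by Birkhoff–von Neumann, this support contains a permutation $\mathbf{P}$ on which the linear term vanishes; one then checks the quadratic term also vanishes on $\mathbf{P}$ because $T^\star_{i,j}T^\star_{k,l}>0$ on the relevant index quadruples forces $\ellA(\hat A_{i,k},A_{j,l})h_jh_l=0$ there. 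With a permutation $\mathbf{P}$ of zero cost in hand, vanishing of term one gives $\ellh(\hat h_{\pi(j)},h_j)=0$ for all $j$, hence $\hat h_{\pi(j)}=h_j$ by positivity of $\ellh$; in particular the support of $\hat{\mathbf h}$ matches that of $\mathbf h$, so the set $\{j : h_j=1\}$ (the ``real'' nodes) is mapped bijectively. Vanishing of term two gives $\ellF(\hat{\mathbf f}_{\pi(j)},\mathbf f_j)=0$, hence $\hat{\mathbf f}_{\pi(j)}=\mathbf f_j$, whenever $h_j=1$; vanishing of term three gives $\hat A_{\pi(j),\pi(l)}=A_{j,l}$ whenever $h_j=h_l=1$. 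These three facts say exactly that $\mathbf{P}\hy$ and $\mathcal P(g)$ agree on all components that are ``unmasked'' in $\mathcal P(g)$, i.e. $\hy \sim \mathcal P(g)$ (the values of $\hy$ on padded coordinates are irrelevant since $\hat h=0$ there and they are masked out of every term).

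Finally, (iii) follows from (ii) by unwinding the definitions: if $\hy \sim \mathcal P(g)$ then $\mathcal T(\hy)$ — thresholding at $1/2$ — satisfies $\mathcal T(\hy) \sim \mathcal T(\mathcal P(g)) = \mathcal P(g)$ since the matched entries are exactly $0$ or $1$ and thresholding is GI-equivariant, and then $\mathcal P^{-1}\mathcal T(\hy) \sim \mathcal P^{-1}\mathcal P(g) = g$ because $\mathcal P^{-1}$ is a well-defined bijection on padded graphs that commutes with isomorphism.

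I expect the step that requires genuine care — rather than routine bookkeeping — to be the reduction from an arbitrary optimal doubly stochastic $\mathbf{T}^\star$ to a zero-cost \emph{permutation} in the ``only if'' part of (ii): one must make sure the quadratic term does not obstruct the Birkhoff–von Neumann vertex argument, which is why I would phrase the argument via ``all nonnegative summands vanish, hence every index pair/quadruple in the support has zero ground cost'' rather than via a generic extreme-point exchange argument.
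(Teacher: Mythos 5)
Your proposal is correct and follows essentially the same route as the paper's proof: nonnegativity of every summand, extraction of a zero-cost permutation from the optimal plan via the Birkhoff--von Neumann decomposition, and then positivity of the ground losses forcing entrywise agreement on the unmasked (non-padded) entries, with the padded entries dismissed exactly as the paper does by abuse of notation. The only cosmetic difference is that you argue the permutation inherits zero cost through the support of $\mathbf{T}^\star$, whereas the paper expands the objective over the convex combination (including the cross terms $\langle \bP_k, \mathbf{L}\otimes\bP_l\rangle$) and concludes each $\bP_k$ is itself a zero-cost plan; both are valid.
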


\begin{proof}
    The direct implication of ii) is the only statement that is not trivial. First, let us show that if $\text{PMFGW}(\hy,\mathcal{P}(g)) = 0$, the optimal transport $\bT^*$ is a permutation. Recall that any transport plan is a convex combination of permutations \citep{birkhoff1946three} i.e. there exist $\lambda_1,...,\lambda_K \in ]0,1]$  and $\bP_1,...,\bP_K \in \sigma_M$ such that $\sum_{k=1}^K \lambda_k = 1$ and $ \bT^* = \sum_{k=1}^K \lambda_k \bP_k$. Thus 
    \begin{align}
        0  &= \langle \bT^*, \mathbf{C}  \rangle + \langle \bT^*, \mathbf{L} \otimes \bT^* \rangle \\
        &= \sum_{k=1}^K \lambda_k  \langle \bP_k, \mathbf{C}  \rangle + \sum_{k=1}^K \lambda_k^2  \langle \bP_k, \mathbf{L} \otimes \bP_k \rangle + \sum_{k \neq l}^K \lambda_k \lambda_l \langle \bP_k, \mathbf{L} \otimes \bP_l \rangle.
    \end{align}
    This is a sum of positive terms, thus all terms are null and in particular, for any $k$

    \begin{align}
        0  &= \langle \bP_k, \mathbf{C}  \rangle + \langle \bP_k, \mathbf{L} \otimes \bP_k \rangle.
    \end{align}

    Thus all the $\bP_k$ are optimal transport plans. In the following, we chose one of them and denote it $\bP$.
    Moving back to the developed formulation of PMFGW we get that

    $$0 = \text{PMFGW}(\hy,\mathcal{P}(g)) =  \sum_{i,j} P_{i,j} \ellh(\hat{h}_{i},h_{j})    
     +   \sum_{i,j}P_{i,j}\ellF(\hat{\mathbf{f}}_{i},\mathbf{f}_{j}) h_{j} 
     +   \sum_{i,j,k,l} P_{i,j}P_{k,l}\ellA(\hat{A}_{i,k},A_{j,l}) h_{j} h_{l}.$$

    Once again this is a sum of positive terms thus for all $i,j,k$, and $l$

    $$0 =  P_{i,j} \ellh(\hat{h}_{i},h_{j})    
     =   P_{i,j}\ellF(\hat{\mathbf{f}}_{i},\mathbf{f}_{j}) h_{j} 
     =   P_{i,j}P_{k,l}\ellA(\hat{A}_{i,k},A_{j,l}) h_{j} h_{l}$$

    and thus 

    $$0 =  \ellh((\bP^T \hbh)_{j},h_{j})    
     =   \ellF((\bP^T \hbF)_{j},F_{j})h_{j} 
     =   \ellA( (\bP^T \hbA \bP)_{j,l},A_{j,l}) h_{j} h_{l}.$$

    And from the positivity of $\ellh, \ellF$ and $\ellA$ we get that: $\bP^T\hbh = \bh$, $\bP^T\hbF[:m] = \bF[:m]$ and $\bP^T\hbA\bP[:m,:m] = \bA[:m,:m]$. Since the nodes $i>m$ are not activated, by abuse of notation we simply write $\bP^T\hbF = \bF$ and $\bP^T\hbA\bP = \bA$.  This concludes that $\hy \sim \mathcal{P}(g)$.

\end{proof}

\subsection{PMFGW and Partial Fused Gromov Wasserstein \label{appendix:PFGW}}

Following \citet{chapel2020partial}, we define an OT relaxation of the Partial Matching problem.

For a large graph $\hat{g} = (\hbF,\hbA) \in \bmG_M$ and a smaller graph $g = (\bF,\bA) \in \bmG_m$, the set of transport plan transporting a subgraph of $\hat{g}$ to $g$ can be defined as 

\begin{equation*}
    \pi_{M,m} = \{ \bT \in [0,1]^{M\times m} \mid \bT\mathbbm{1}_m
\leq \mathbbm{1}_M, \bT^T \mathbbm{1}_M = \mathbbm{1}_m, \mathbbm{1}_M^T \bT \mathbbm{1}_m = m \} 
\end{equation*}

and the associated partial Fused Gromov Wasserstein distance is 

\begin{equation*}
    \text{partialFGW}(\hat{g},g) = \min_{\bT \in \pi_{M,m}} \sum_{i=1}^M \sum_{j=1}^m \bT_{i,j} \ell_F(\hat{\mathbf{f}}_i,\mathbf{f}_j) + \sum_{i,k=1}^M \sum_{j,l=1}^m \bT_{i,j}\bT_{k,l} \ell_A(A_{i,k}, A_{j,l}).
\end{equation*}

In the following, we show that $\text{partialFGW}(\hat{g},g)$ is equivalent to the padded Fused Gromov Wasserstein distance defined as

\begin{align*}
    \text{paddedFGW}(\hat{g},g) = \min_{\bT \in \pi_{M}} \sum_{i=1}^M \sum_{j=1}^m \bT_{i,j} \ell_F(\hat{\mathbf{f}}_i,\mathbf{f}_j) + \sum_{i,k=1}^M \sum_{j,l=1}^m \bT_{i,j}\bT_{k,l} \ell_A(A_{i,k}, A_{j,l}).
\end{align*}

\begin{lemma}
    Any transport plan $\bT \in  \pi_{M}$ has the form 
    $ \bT = \begin{pmatrix}
        \bT_p & \bT_2
    \end{pmatrix}  $ 
    where $\bT_p$ is a partial transport plan i.e. $\bT_p \in \pi_{M,m}$.
\end{lemma}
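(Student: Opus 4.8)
The plan is to unpack the two marginal constraints defining a doubly stochastic matrix and read off, column-block by column-block, the three conditions defining $\pi_{M,m}$. Given $\bT \in \pi_M$, I would write $\bT = \begin{pmatrix} \bT_p & \bT_2 \end{pmatrix}$, where $\bT_p \in [0,1]^{M\times m}$ collects the first $m$ columns of $\bT$ and $\bT_2 \in [0,1]^{M\times(M-m)}$ the remaining $M-m$ columns. This is precisely the decomposition relevant to $\text{paddedFGW}$, since there the feature and structure costs only involve column indices $j,l\le m$, so the block $\bT_2$ never enters the objective; the whole point of the lemma is to identify what the admissible "first block" $\bT_p$ can be.

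The three verifications are then immediate. First, the column-marginal constraint $\bT^T\mathbbm{1}_M = \mathbbm{1}_M$, restricted to its first $m$ coordinates, is exactly $\bT_p^T\mathbbm{1}_M = \mathbbm{1}_m$: each of the first $m$ columns of $\bT$ sums to one. Second, fixing a row index $i\le M$, the row-marginal constraint $\bT\mathbbm{1}_M = \mathbbm{1}_M$ reads $\sum_{j=1}^M T_{i,j} = 1$, and since every entry of $\bT$ — in particular of the discarded block $\bT_2$ — is nonnegative, $\sum_{j=1}^m T_{i,j} \le \sum_{j=1}^M T_{i,j} = 1$; collecting this over all rows gives $\bT_p\mathbbm{1}_m \le \mathbbm{1}_M$. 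Third, summing the column-sum identity over $j=1,\dots,m$ yields $\mathbbm{1}_M^T\bT_p\mathbbm{1}_m = \sum_{j=1}^m\sum_{i=1}^M T_{i,j} = m$. These are exactly the defining conditions of $\pi_{M,m}$, hence $\bT_p \in \pi_{M,m}$.

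There is no real obstacle here: the statement follows purely from nonnegativity together with the two marginal equalities, the only mild subtlety being that the row inequality uses nonnegativity of the block $\bT_2$ that is being dropped. I would emphasize that this lemma provides just one direction of the equivalence $\text{partialFGW}(\hat g,g) = \text{paddedFGW}(\hat g,g)$; the converse direction (the part one actually has to think about, though still routine) is to complete an arbitrary $\bT_p \in \pi_{M,m}$ into a doubly stochastic $\bT \in \pi_M$ by filling the last $M-m$ columns so that all row sums become one — feasible because the row deficits $1 - (\bT_p\mathbbm{1}_m)_i$ are nonnegative and sum to $M-m$ — after which comparing objective values (the added block being cost-free) gives the equivalence, and then Proposition~\ref{prop:pfgw} yields the claimed relation between PMFGW and PFGW.
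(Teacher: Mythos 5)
Your proof is correct and follows essentially the same route as the paper's: both verify the three defining conditions of $\pi_{M,m}$ directly from the row-marginal constraint (with nonnegativity of the discarded block $\bT_2$ giving the inequality $\bT_p\mathbbm{1}_m \le \mathbbm{1}_M$), the column-marginal constraint restricted to the first $m$ columns, and the total-mass identity obtained by summing the column sums. Your closing remarks about the converse completion step correctly anticipate the paper's second lemma.
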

\begin{proof}
    Let us check that $\bT_p$ is in $\pi_M$.
    \begin{itemize}
        \item $\mathbbm{1}_M = \bT\mathbbm{1}_M = \bT_p\mathbbm{1}_m + \bT_2\mathbbm{1}_{M-m} \geq \bT_p\mathbbm{1}_m$
        \item $\mathbbm{1}_M = \bT^T\mathbbm{1}_M = \begin{pmatrix}
            \bT_p^T \mathbbm{1}_M \\
            \bT_2^T \mathbbm{1}_{M}
        \end{pmatrix}$. And thus $\bT_p^T \mathbbm{1}_M = \mathbbm{1}_m$
        \item From the previous we immediately get that $\mathbbm{1}_M^T \bT_p \mathbbm{1}_m = m$
    \end{itemize}
\end{proof}

\begin{lemma}
    For any partial transport plan $\bT_p \in  \pi_{M,m}$ there exist $T_2 \in \mathbb{R} ^{M \times (M-m)}$ such that $\bT =  \begin{pmatrix}
        \bT_p & \bT_2
    \end{pmatrix}  \in \pi_M$. 
\end{lemma}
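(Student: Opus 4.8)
The plan is to prove this easy converse to the preceding lemma by writing down an explicit $\bT_2$ and checking directly that $\bT=\begin{pmatrix}\bT_p & \bT_2\end{pmatrix}$ lies in $\pi_M$. The key observation is a mass-bookkeeping one: set $\mathbf{r}=\bT_p\mathbbm{1}_m\in[0,1]^M$, the row-sum vector of $\bT_p$. The three constraints defining $\pi_{M,m}$ give, respectively, $\mathbf{r}\le\mathbbm{1}_M$ entrywise, $\bT_p^\top\mathbbm{1}_M=\mathbbm{1}_m$, and $\mathbbm{1}_M^\top\mathbf{r}=m$. So row $i$ of the new block must carry the residual mass $1-r_i\ge 0$, the total residual mass is exactly $M-m$, and this must be spread over the $M-m$ new columns so that each of them sums to $1$.

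First I would dispose of the degenerate case $M=m$: then $\mathbf{r}\le\mathbbm{1}_M$ together with $\mathbbm{1}_M^\top\mathbf{r}=M$ forces $\mathbf{r}=\mathbbm{1}_M$, so $\bT_p$ already lies in $\pi_M$ and there is nothing to append. For $M>m$ I would take the simplest ``uniform spreading'' choice,
$$\bT_2=\tfrac{1}{M-m}\,(\mathbbm{1}_M-\mathbf{r})\,\mathbbm{1}_{M-m}^\top\in\mathbb{R}^{M\times(M-m)},$$
so that entry $(i,j)$ of $\bT_2$ equals $(1-r_i)/(M-m)$.

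Then I would verify the three requirements for membership in $\pi_M$. Entrywise bounds: entries of $\bT_2$ are nonnegative and $\le 1$ because $0\le 1-r_i\le 1\le M-m$, while entries of $\bT_p$ are in $[0,1]$ by definition of $\pi_{M,m}$, hence $\bT\in[0,1]^{M\times M}$. Row sums: $\bT\mathbbm{1}_M=\bT_p\mathbbm{1}_m+\bT_2\mathbbm{1}_{M-m}=\mathbf{r}+(\mathbbm{1}_M-\mathbf{r})=\mathbbm{1}_M$. Column sums: the first $m$ coordinates of $\bT^\top\mathbbm{1}_M$ are $\bT_p^\top\mathbbm{1}_M=\mathbbm{1}_m$, and the last $M-m$ coordinates are $\bT_2^\top\mathbbm{1}_M=\tfrac{1}{M-m}\big((\mathbbm{1}_M-\mathbf{r})^\top\mathbbm{1}_M\big)\mathbbm{1}_{M-m}=\mathbbm{1}_{M-m}$, since $(\mathbbm{1}_M-\mathbf{r})^\top\mathbbm{1}_M=M-\mathbbm{1}_M^\top\mathbf{r}=M-m$. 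Thus $\bT^\top\mathbbm{1}_M=\mathbbm{1}_M$ and $\bT\in\pi_M$.

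There is no genuine obstacle here; the only things to be careful about are to actually use all three defining constraints of $\pi_{M,m}$ (the inequality for nonnegativity of the residual row masses, the equality $\bT_p^\top\mathbbm{1}_M=\mathbbm{1}_m$ for the first block of column sums, and the scalar constraint $\mathbbm{1}_M^\top\bT_p\mathbbm{1}_m=m$ to make the last block of column sums come out to $1$), and to treat $M=m$ separately so that dividing by $M-m$ is legitimate. As an alternative to the closed form one could simply invoke non-emptiness of the transportation polytope with supplies $1-r_i$ and $M-m$ unit demands, but the explicit $\bT_2$ above is cleaner and self-contained.
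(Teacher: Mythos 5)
Your construction is exactly the paper's: the residual mass vector $\mathbbm{1}_M-\mathbf{r}$ is the paper's $p=\mathbbm{1}_M-\bT_p\mathbbm{1}_m$, and the uniform spreading $\bT_2=\tfrac{1}{M-m}(\mathbbm{1}_M-\mathbf{r})\mathbbm{1}_{M-m}^\top$ with the same row/column-sum checks is precisely what the paper does. Your additional care with the degenerate case $M=m$ and the entrywise upper bound is a minor refinement, not a different argument.
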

\begin{proof}
    Let us define $p = \mathbbm{1}_M - \bT_p \mathbbm{1}_m$. This is the mass of the larger graph that is not matched by $\bT_p$. Note that since $\bT_p \in  \pi_{M,m}$ we have that $p\geq 0$. Thus we can set $\bT_2 = \frac{1}{M-m} p \mathbbm{1}_{M-m}^T$ i.e. we spread the remaining mass uniformly across the padding nodes. Let us check that $\bT =  \begin{pmatrix}
        \bT_p & \bT_2
    \end{pmatrix}  \in \pi_M$ is indeed a valid transport plan.

    \begin{itemize}
        \item $\bT \mathbbm{1}_{M} = \bT_p\mathbbm{1}_m + \bT_2\mathbbm{1}_{M-m} = \bT_p\mathbbm{1}_m  + p =  \mathbbm{1}_{m}$
       \item $\bT^T \mathbbm{1}_{M} = \begin{pmatrix}
            \bT_p^T \mathbbm{1}_M \\
            \bT_2^T \mathbbm{1}_{M}
        \end{pmatrix} = \begin{pmatrix}
             \mathbbm{1}_m \\
            \frac{1}{M-m} (p^T \mathbbm{1}_{M})\mathbbm{1}_{M-m}
        \end{pmatrix} = \begin{pmatrix}
             \mathbbm{1}_m \\
            \frac{1}{M-m} (\mathbbm{1}_M^T\mathbbm{1}_{M} -  \mathbbm{1}_m^T \bT_p^T \mathbbm{1}_{M})\mathbbm{1}_{M-m}
        \end{pmatrix} = \begin{pmatrix}
             \mathbbm{1}_m \\
            \mathbbm{1}_{M-m}
        \end{pmatrix} = \mathbbm{1}_M$
    \end{itemize}
\end{proof}

\begin{proposition}\label{pfgw2}
 \text{paddedFGW} and \text{partialFGW} are equal and any optimal plan $\bT^*$ of $\text{paddedFGW}$ has the form $\bT^* = (T_p^*, T_2)$ where $T_p^*$ is optimal for \text{partialFGW}.
\end{proposition}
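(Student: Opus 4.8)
The plan is to reduce everything to the observation that the objectives of $\text{paddedFGW}$ and $\text{partialFGW}$ are \emph{the very same function} once we pass to the first $m$ columns of the transport plan, and then to match up the two feasible sets using the two lemmas just proved. Concretely, in both definitions the column indices $j$ and $l$ range only over $\{1,\dots,m\}$, so if I write any $\bT\in\pi_M$ in block form $\bT=(\bT_p\ \bT_2)$ with $\bT_p$ its first $m$ columns, then the $\text{paddedFGW}$ objective evaluated at $\bT$ involves only the entries of $\bT_p$ and is numerically equal to the $\text{partialFGW}$ objective evaluated at $\bT_p$. Denote this common value $\Phi(\bT_p)$; then $\text{paddedFGW}(\hat g,g)=\min_{\bT\in\pi_M}\Phi(\bT_p)$ and $\text{partialFGW}(\hat g,g)=\min_{\bT_p\in\pi_{M,m}}\Phi(\bT_p)$.

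Next I would invoke the two preceding lemmas to identify the relevant index sets. The first lemma says the restriction map $\bT\mapsto\bT_p$ sends $\pi_M$ into $\pi_{M,m}$; the second lemma says every $\bT_p\in\pi_{M,m}$ is the first $m$ columns of some $\bT\in\pi_M$. Together these give $\{\bT_p : \bT\in\pi_M\}=\pi_{M,m}$. Since minimizing $\Phi$ over $\pi_M$ (a function that factors through the restriction map) is the same as minimizing $\Phi$ over the image of that map, we get $\min_{\bT\in\pi_M}\Phi(\bT_p)=\min_{\bT_p\in\pi_{M,m}}\Phi(\bT_p)$, i.e. $\text{paddedFGW}(\hat g,g)=\text{partialFGW}(\hat g,g)$.

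Finally, for the structure of optimal plans: let $\bT^{*}=(T_p^{*}\ T_2^{*})$ be optimal for $\text{paddedFGW}$. By the first lemma $T_p^{*}\in\pi_{M,m}$, and $\Phi(T_p^{*})=\text{paddedFGW}(\hat g,g)=\text{partialFGW}(\hat g,g)$ by the equality just established, so $T_p^{*}$ attains the minimum defining $\text{partialFGW}$, i.e. it is optimal there. This is exactly the claimed form.

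There is no genuinely hard step here — all the substance is in the two lemmas, which are already proved. The only point requiring care is the bookkeeping that not just the linear term but also the \emph{quadratic} term $\sum_{i,k=1}^{M}\sum_{j,l=1}^{m}\bT_{i,j}\bT_{k,l}\ell_A(A_{i,k},A_{j,l})$ involves only column indices $\le m$, hence depends on $\bT$ solely through $\bT_p$; and, relatedly, that the block $\bT_2$ is truly unconstrained by the objective, so the image of the restriction map is all of $\pi_{M,m}$ rather than a proper subset — which is precisely what the second lemma guarantees.
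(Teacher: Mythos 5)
Your proof is correct and takes essentially the same route as the paper, whose proof of this proposition is simply ``follows directly from the two previous lemmas''; you have filled in exactly the intended bookkeeping (the objective depends on $\bT$ only through its first $m$ columns, and the two lemmas show the restriction map from $\pi_M$ onto those columns has image exactly $\pi_{M,m}$).
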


\begin{proof}
    Follows directly from the two previous lemmas.
\end{proof}

\begin{remark}
    Since $\text{paddedFGW}$ is equivalent to the proposed PMFGW loss if and only if $\ell_h$ is set to a constant, proposition \ref{prop:pfgw} is a direct corollary of Proposition \ref{pfgw2}.
\end{remark}

\begin{remark}
    The algorithm proposed to compute PMFGW can be applied to $\text{paddedFGW}$ and thus to $\text{partialFGW}$. Hence, we have indirectly introduced an alternative to the algorithm of \citet{chapel2020partial}. Further comparisons are left for future work.
\end{remark}


\section{Encoding Any Input To Graph \label{appendix:encoder}}

\paragraph{Philosophy of the Any2Graph encoder} Any2Graph is compatible with different types of inputs, given that one selects the appropriate encoder. The role of the encoder is to extract a set of feature vectors from the inputs $x$ i.e. each input is mapped to a list of $k$ feature vectors of dimension $d_e$ where $k$ is not necessarily fixed. This is critically different from extracting a unique feature vector ($k=1$). If $k$ is set to $1$, the rest of the architecture must reconstruct an entire graph from a single vector, and the architecture is akin to that of an auto-encoder. In Any2Graph, we avoid this undesirable bottleneck by opting for a richer $(k>1)$ and more flexible ($k$ is not fixed) representation of the input. The $k$ feature vectors are then fed to a transformer which is well suited to process sets of different sizes. Since the transformer module is permutation-invariant any meaningful ordering is lost in the process. To alleviate this issue, we add positional encoding to the feature vectors whenever the ordering carries information. Finally, note that the encoder might highly benefit from pre-training whenever applicable; but this goes beyond the scope of this paper.

We now provide a general description of the encoders that can be used for each input modality. 

\begin{figure}[t]
    \centering
    \includegraphics[width=\linewidth]{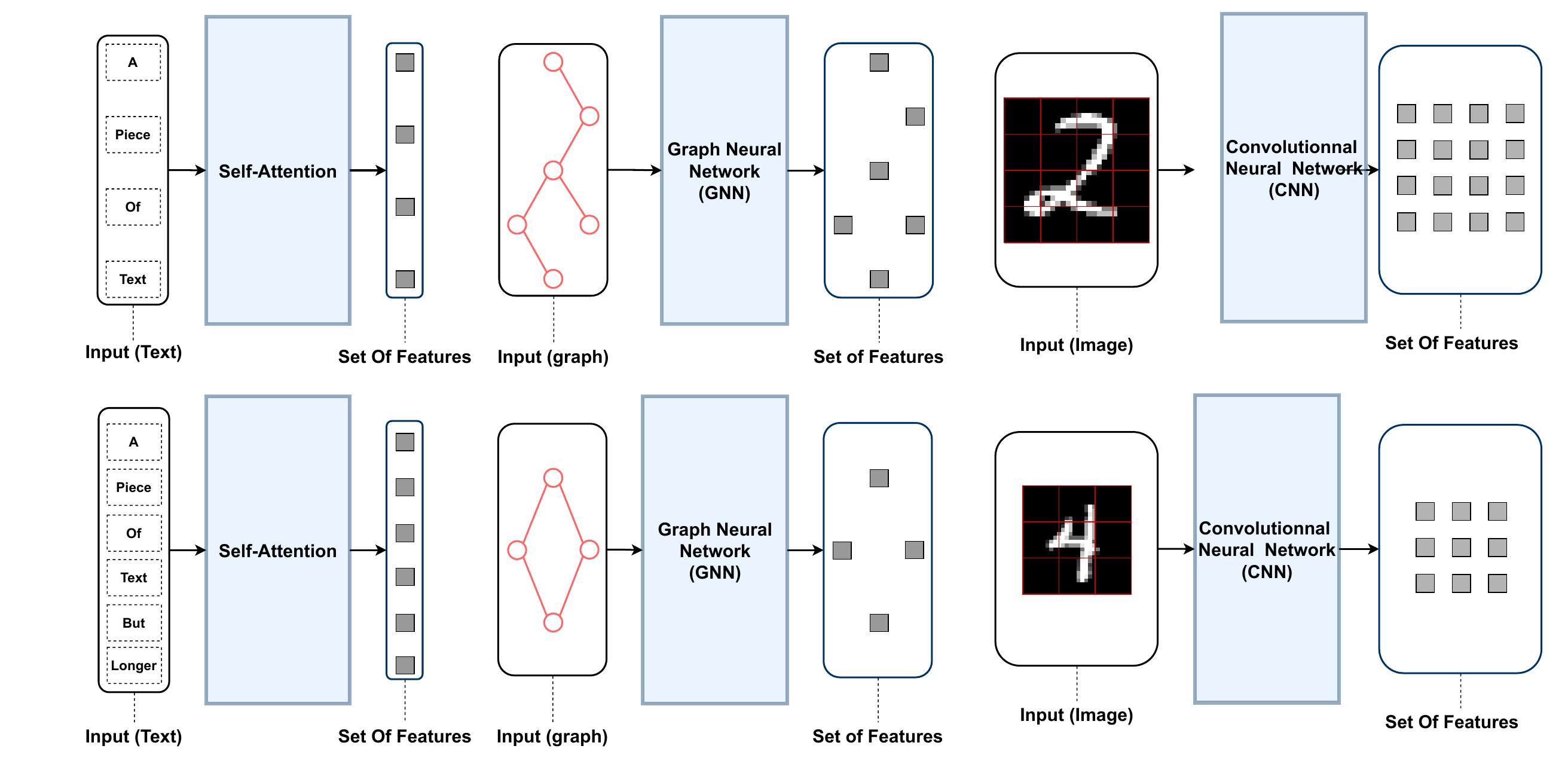}
    \caption{Illustration of encoders extracting $k$ features vectors for different input modalities. For text/fingerprint, $k$ is the number of input tokens. For graphs, $k$ is the size of the input graph. For images, $k$ depends on the resolution of the image and the CNN kernel size.  }
    \label{fig:encoders}
    \vspace{-5mm}
\end{figure}

\paragraph{Images} For Image2Graph task we use Convolutional Neural Networks (CNN) as suggested in \citep{shit2022relationformer}. From an input image of shape $ h \times w \times c$ the CNN outputs a tensor of shape $ H \times W \times C$ which is seen as $H\times W$ feature vectors of dimension $C$. The raw output of the CNN is reshaped and passed through a linear layer to produce the final output of shape $H\times W \times d_e$. Since the ordering of the $H\times W$ features carries spatial information we add positional encoding accordingly.

\paragraph{Fingerprint/text} For tasks where the input is a list of tokens (e.g. Text2Graph or Fingerprint2Graph) we use the classical NLP pipeline: each token is transformed into a vector by an embedding layer and the list of vectors is then processed by a transformer encoder module. In text2graph the tokens ordering carries semantic meaning and positionnal encoding should be added. On the contrary, in Fingerprint2Graph, the fingerprint ordering carries no information and the permutation invariance of the transformer module is a welcomed property.

\paragraph{Graph} For a graph2graph task (not featured in this paper) we would suggest using a Graph Neural Network (GNN) \citep{xu2018powerful}. A GNN naturally extracts $k$ feature vectors from an input graph, where $k$ is the number of nodes in the input graph. No positional encoding is required.

\paragraph{Vector} We explore a Vect2Graph task in Appendix \ref{appendix:coloring}. The naive encoder we use is composed of $k$ parallel MLPs devoted to the extraction of the $k$ feature vectors. This approach is arguably simplistic and more suited encoders should be considered depending on the type of data.

\section{COLORING: a new synthetic dataset for benchmarking Supervised Graph Prediction \label{appendix:coloring}}
We introduce \textit{Coloring}, a new synthetic dataset well suited for benchmarking SGP methods. The main advantages of \textit{Coloring} are: 
\begin{itemize}
    \item The output graph is uniquely defined from the input image.
    \item The complexity of the task can be finely controlled by picking the distribution of the graph sizes, the number of node labels (colors) and the resolution of the input image.
    \item One can generate as many pairs (inputs, output) as needed to explore different regimes, from abundant to scarce data.
\end{itemize}
To generate a new instance of \textit{Coloring}, we apply the following steps:

\begin{figure}[h!]
\begin{center}
\centerline{ \includegraphics[width=0.18\columnwidth]{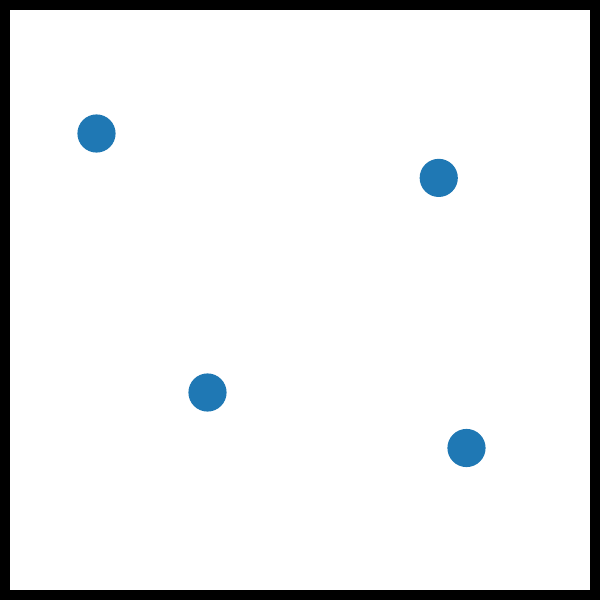}
\includegraphics[width=0.18\columnwidth]{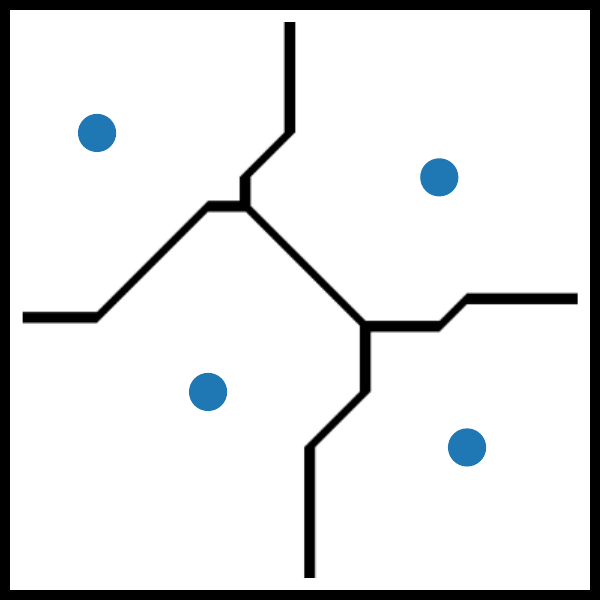}
\includegraphics[width=0.18\columnwidth]{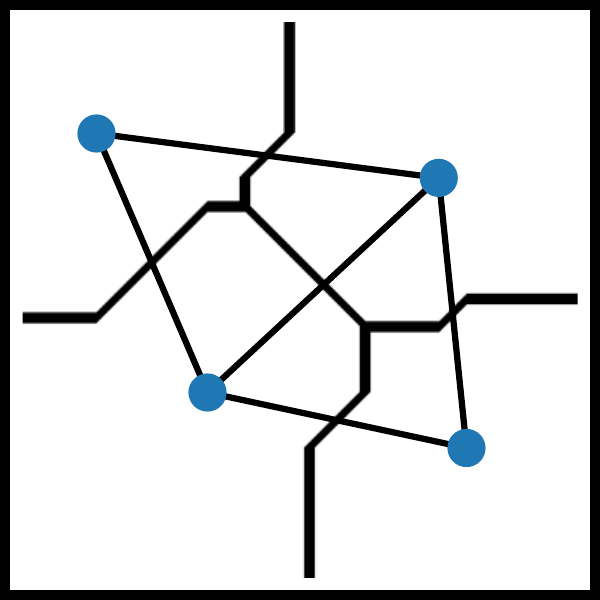}
\includegraphics[width=0.18\columnwidth]{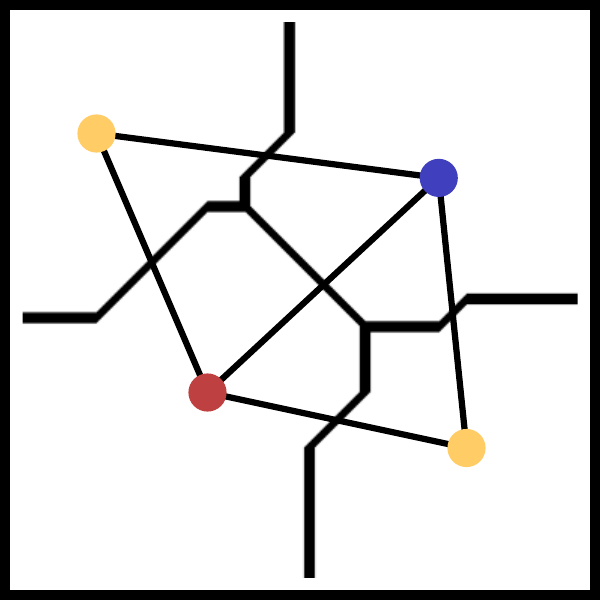}
\includegraphics[width=0.18\columnwidth]{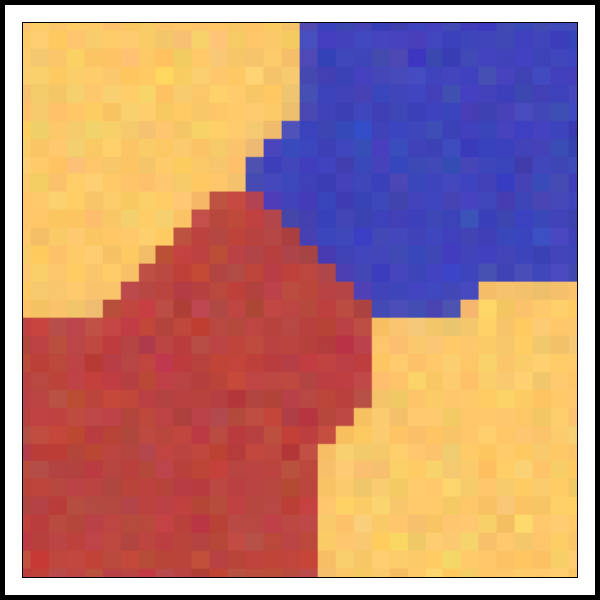}
}
\vspace{-1mm}
\caption{Illustration of the five steps to follow to generate a new instance of \textit{Coloring}.}
\label{fig:coloring}
\end{center}
\vskip -0.2in
\end{figure}

\begin{itemize}
    \item 0) Sample the number of nodes (graph size) $m$. In this paper, we sample uniformly on some interval $[M_{\text{min}},M_{\text{max}}]$.
    \item 1) Sample $m$ centroids on $[0,1]\times[0,1]$. In this paper, we sample the centroids as uniform i.i.d. variables.
    \item 2) Partition $[0,1]\times[0,1]$ (the image) in a Voronoi diagram fashion \citep{okabe2009spatial}. In this paper, we use the $L_1$ distance. and an image of resolution $H \times H$.
    \item 3) Create the associated graph i.e. each node is a region of the image and two nodes are linked by an edge whenever the two associated regions are adjacent.
    \item 4) Color the graph with $K>4$ colors. In this paper, we use $K=4$. A coloring is said to be valid whenever no adjacent nodes have the same color. Note that graph coloring is known to be NP-complete \citep{karp2010reducibility}.
    \item 5) Color the original image accordingly. 
\end{itemize}

As highlighted above, \textit{Coloring} is a flexible dataset. Beyond the default dataset simply referred as \textit{Coloring} we also explore 2 variations in our experiments. \textit{ColoringBig} is a more challenging dataset that features larger graphs. \textit{ColoringVect} is a variation of \textit{Coloring} where the input image is flattened and treated as a vector allowing us to explore a synthetic Vect2Graph task. The properties of these datasets, along with the performances of Any2Graph are reported in table \ref{tab:coloring}. We hope that \textit{Coloring} will be used to benchmark future SGP methods.

\begin{table}[h!]
\caption{Summary of the properties of the 3 variations of \textit{Coloring} considered in this paper. We also report the test edit distance achieved by the different models. For FGWBary we report the best performing variant that is ILE for \textit{ColoringVect} and NN for \textit{Coloring}. None scales to \textit{ColoringBig}.}
\vspace{1mm}
\label{tab:coloring}
\centering
\begin{sc}
\resizebox{\columnwidth}{!}{%
\begin{tabular}{l|ccccccc}
    \toprule
     Dataset & $M_{\text{min}}$ & $M_{\text{max}}$ & H & Number Of Samples & Any2Graph & Relationformer & FGWBary-NN\\
     \midrule
     \textit{ColoringVect} & $4$ & $6$ & 16 & 100k & 0.46 & 1.40 & 2.09\\
     \textit{Coloring} & $6$ & $10$ & 32 & 100k & 0.20  & 5.47 & 6.73 \\
     \textit{ColoringBig} & $10$ & $15$ & 64 & 200k & 1.01 & 8.91 & n.a. \\
     \bottomrule
\end{tabular}
}
\end{sc}
\end{table}

\newpage

\section{Additional Details On The Experimental setting \label{appendix:experimentalsetting}}

\subsection{Datasets \label{appendix:datasets}}

In this paper, we consider five datasets for which we provide a variety of statistics in Table~\ref{tab:datasets}. \textit{Coloring} is a new synthetic dataset which we describe in detail in Appendix \ref{appendix:coloring}. \textit{Toulouse} (resp. \textit{USCities}) is a Sat2Graph dataset \citep{belli2019image} where the inputs are images of size $64 \times 64$ (resp. $128 \times 128$). \textit{QM9} \citep{wu2018moleculenet} and \textit{GDB13} \citep{blum2009970}  are datasets of small molecules which we use to address the Fingerprint2Graph task. Here, we compute a fingerprint representation of the molecule and attempt to reconstruct the original molecule from this loss representation. Following \citet{ucak2023reconstruction} we use the Morgan Radius-2 fingerprint \citep{landrum2013rdkit} which represents a molecule by a bit vector of size $2048$, where each bit represents the presence/absence of a given substructure. Finally, we feed our model with the list of non-zeros bits, i.e. the list of substructures (tokens) present in the molecule. The list of substructures has a min/average/max length of $2/21/27$ for \textit{QM9} and  $7/29/36$ for \textit{GDB13}.

\begin{table*}[h]
\caption{Table summarizing the properties of the datasets considered.}
\label{tab:datasets}
\vspace{-2mm}
\begin{center}
\begin{small}
\begin{sc}
\resizebox{\columnwidth}{!}{%
\begin{tabular}{l||c|c|c|c|c}
\toprule
\multirow{2}{*}{Dataset} & Size  & Nodes  & Edges  & \multirow{2}{*}{Input Modality} & \multirow{2}{*}{Node features}   \\
 & (train/test/valid) &  (min/mean/max) &  (min/mean/max) & &   \\
\midrule
\textit{Coloring}    & 100k/10k/10k & 4/7.0/10 & 3/10.9/22 &  RGB Images  & 4 Classes (colors) \\
\textit{Toulouse}  & 80k/10k/10k & 3/6.1/9  & 2/5.0/14  & Grey Images & 2D positions \\
\textit{USCities}  & 130k/10k/10k & 2/7.5/17 & 1/5.8/20  & Grey Images & 2D positions  \\
\textit{QM9}    & 120k/10k/10k & 1/8.8/9  & 0/9.4/13  & List of tokens & 4 classes (atoms)  \\
\textit{GDB13}  & 1300k/70k/70k & 5/12.7/13 & 5/15.15/18 &    List of tokens & 5 classes (atoms)  \\
\bottomrule
\end{tabular}
}
\end{sc}
\end{small}
\end{center}
\vskip -0.1in
\end{table*}

\subsection{Training And Architecture Hyperparameters \label{appendix:hyperparameters}}

\paragraph{Encoder} We follow the guidelines established in \ref{appendix:encoder} for the choice of the encoder. In particular, all encoders for \textit{Coloring}, \textit{Toulouse} and \textit{USCities} are CNNs. The encoder for \textit{Coloring} is a variation of Resnet18 \cite{he2016deep}, where we remove the first max-pooling layer and the last two blocks to accommodate for the low resolution of our input image. We proceed similarly for \textit{Toulouse} except that we only remove the last block. For \textit{USCities} we keep the full Resnet18. For the Fingerprint2Graph datasets, we use a transformer encoder. In practice, this transformer encoder and that of the encoder-decoder module are merged to avoid redundancy. All encoders end with a linear layer projecting the feature vectors to the hidden dimension $d_e$.

\paragraph{Transformer} We use the Pre-LN variant \citet{xiong2020layer} of the transformer encoder-decoder model as described in ~\cite{vaswani2017attention}. To reduce the number of hyperparameters, encoder and decoder modules both consist of stacks of $N_{\tau}$ layers, with $N_h$ heads and the hidden dimensions of all MLP is set to $4 \times d_e$.

\paragraph{Decoder} All MLPs in the decoder module have one hidden layer. 

\paragraph{Optimizer} We train all neural networks with the Adam optimizer \citet{kingma2014adam}, learning rate $\eta$, 8000 warm-up steps and all other hyperparameters set to default values. We also use gradient clipping with a max norm set to 0.1. 

All hyperparameters are given in Table~\ref{tab:hyperparameters}.

\begin{table*}[h]
\caption{Hyperparameters used to train our models. We also report the total training time on a NVIDIA V100.}
\vspace{-0.3cm}
\label{tab:hyperparameters}
\begin{center}
\begin{small}
\begin{sc}
\begin{tabular}{l|ccc|cccc|cccc}
\toprule
\multirow{2}{*}{Dataset} & \multicolumn{3}{c}{PMFGW} & \multicolumn{4}{c}{Architecture} & \multicolumn{4}{c}{Optimization} \\ 
 & $\alpha_\text{h}$ & $\alpha_{F}$ & $\alpha_\text{A}$ & $d_e$ & $N_\tau$ & $N_h$  & $M$ & $\eta$ & Batchsize & Steps & Time  \\ 
 \midrule
\textit{Coloring} & 1 & 1 & 1  & 256 & 3 & 8 &  12 & 3e-4 & 128 & 75k & 4h \\
\textit{Toulouse} & 1 & 5 & 1  & 256 & 4 & 8 &  12 & 1e-4 & 128 & 100k & 8h \\
\textit{USCities} & 2 & 5 & 0.5 & 256  & 4 & 8 &  20 & 1e-4 & 128 & 150k& 14h \\
\textit{QM9} & 1 & 1 & 1 &  128 & 3 & 4 &  12 & 3e-4  & 128 & 150k& 6h \\
\textit{GDB13} & 1 & 1 & 1 &  512 & 5 & 8 &  15 & 3e-4  & 256 & 150k& 24h \\
\bottomrule
\end{tabular}
\end{sc}
\end{small}
\end{center}
\vskip -0.1in
\end{table*}

\subsection{Metrics \label{appendix:metrics}}
In the following, we provide a detailed description of the metrics reported in table \ref{tab:bigtable}. 

\paragraph{Graph Level}  First we report the PMFGW loss between continuous prediction $\hy$ and padded target  $\mathcal{P}(g)$. For this computation, we set $\boldsymbol{\alpha}$ to the values displayed in table \ref{tab:hyperparameters}. 

$$\text{PMFGW}(\hy,\mathcal{P}(g))$$

Then we report the graph edit distance \citet{gao2010survey} between predicted graph $\padding^{-1} \thresholding \hy$ and target $g$ which we compute using Pygmtools \citet{wang2024pygm}. All edit costs (nodes and edges) are set to 1. Note that for \textit{Toulouse} and \textit{USCities}, node labels are 2D positions and we consider two nodes features to be equal (edit cost of 0) whenever the L2 distance is smaller than $5\%$ than the image width. 

$$\textsc{Edit}(\padding^{-1} \thresholding \hy,g)$$

Finally, we report the Graph Isomorphism Accuracy \textsc{GI Acc} that is 

$$\textsc{GI Acc}(\hy,g) = \mathbbm{1}[ \textsc{Edit}(\padding^{-1} \thresholding \hy,g) = 0 ]$$

\paragraph{Node level} Recall that for a prediction $\hy = (\hbh,\hbF,\hbA)$ the size of the predicted graph is $\hat{m} = || \hat{h} > 0.5  ||_1$. Denoting $m$ the size of the target graph we report the size accuracy:

$$\textsc{SIZE Acc}(\hy,g) = \mathbbm{1}[ \hat{m} = m ].$$

The remaining node and edge-level metrics need the graphs to have the same number of nodes. To this end, we select the $m$ nodes with the highest probability $\hat{h}_i$, resulting in a graph $\hat{g} =(\tilde{\mathbf{F}},\tilde{\mathbf{A}})$ with ground truth size. This is equivalent to assuming that the size of the graph is well predicted. Then we use Pygmtools to compute a one-to-one matching $\sigma$ between the nodes of $\hat{g}$ and $g$ that can be used to align graphs (we use the matching that minimizes the edit distance). In the following, we assume that $g$ and $\hat{g}$ have been aligned. We can now define the node accuracy $\textsc{NODE Acc}$ as 

$$\textsc{NODE Acc}(\hat{g},g) = \frac{1}{m} \sum_{i=1}^m \mathbbm{1}[ \Tilde{F}_i = F_i ],$$

which is the average number of node features that are well predicted.

\paragraph{Edge level} Since the target adjacency matrices are typically sparse, the edge prediction accuracy is a poorly informative metric. To mitigate this issue we report both Edge Precision and Edge Recall :

$$\textsc{EDGE Prec.}(\hat{g},g) = \frac{\sum_{i,j=1}^m  \mathbbm{1}[\tilde{A}_{i,j} = 1, A_{i,j} = 1 ] }{\sum_{i,j=1}^m  \mathbbm{1}[\tilde{A}_{i,j} = 1 ]} $$

$$\textsc{EDGE Prec.}(\hat{g},g) = \frac{\sum_{i,j=1}^m  \mathbbm{1}[\tilde{A}_{i,j} = 1, A_{i,j} = 1 ] }{\sum_{i,j=1}^m  \mathbbm{1}[A_{i,j} = 1]} $$

All those metrics are then averaged other the test set. 

\subsection{Compute resources}

We estimate the total computational cost of this work to be approximately 1000 hours of GPU (mostly Nvidia V100). We estimate that up to 70\% of this computation time was used in preliminary work and experiments that did not make it to the paper. 


\section{Additional Experiments and figures \label{appendix:moreexp}}

\subsection{Learning dynamic \label{appendix:curves}} 

The PMFGW loss is composed of three terms, two of them are linear and account for the prediction of the nodes and their features, one is quadratic and accounts for the prediction of edges. The last term is arguably the harder to minimize for the model, as a consequence, we observe that the training performs best when the two first terms are minimized first which then guides the minimization of the structure term. In other words, the model must first learn to predict the nodes before addressing their relationship. Fortunately, this behavior naturally arises in Any2Graph as long as $\alpha_\text{A}$, the hyperparameter controlling the importance of the quadratic term, is not too large. This is illustrated in figure \ref{fig:alphacurve}.

\begin{figure}[h!]
    \centering
    \includegraphics[width=0.45\columnwidth]{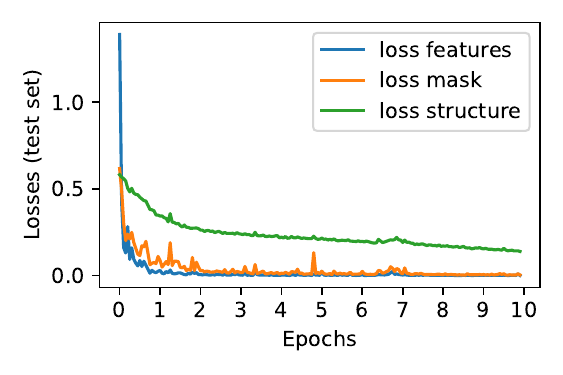}
    \hfill
    \includegraphics[width=0.45\columnwidth]{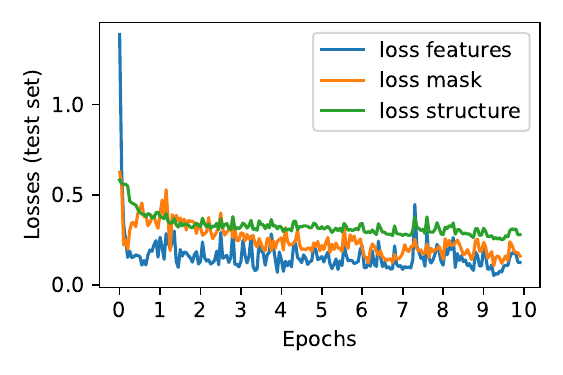}
    \caption{First epochs of training for \textit{Coloring}. The test values of the 3 components of the loss are reported. On the left (resp. right) $\boldsymbol{\alpha}$ is set to $[1,1,1]$ (resp. [1,1,10]). In the first scenario, the first two terms of the loss are learned very fast and the structure is optimized next. In the second scenario, setting $\alpha_\text{A} = 10$ prevents this desirable learning dynamic. \label{fig:alphacurve}}
\end{figure}

For the datasets where many nodes in the graphs share the same features (\textit{QM9} and \textit{GDB13}) the good prediction of the nodes and their features is not enough to guide the prediction of the edges and this desirable dynamic does not occur. This motivates us to perform Feature Diffusion (FD) before training. The diffused node features carry a portion of the structural information. This makes the node feature term slightly harder to minimize but in turn, the subsequent prediction of the structure is much easier and we recover the previous dynamic. This is illustrated in figure \ref{fig:FDcurve}.

\begin{figure}[h!]
    \centering
    \includegraphics[width=0.45\columnwidth]{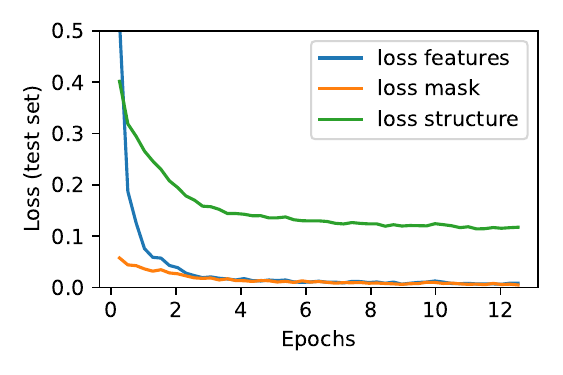}
    \hfill
    \includegraphics[width=0.45\columnwidth]{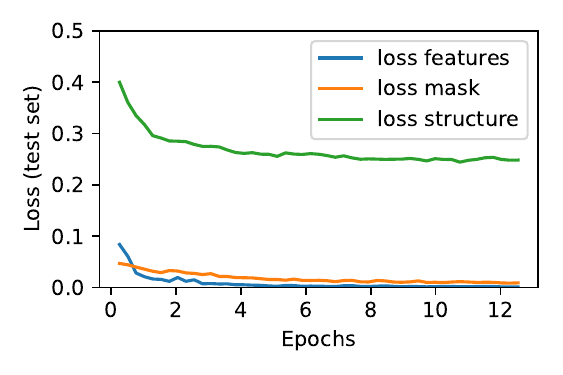}
    \caption{ First epochs of training for \textit{GDB13}. The test values of the 3 components of the loss are reported. On the left, we perform FD before training, on the right, we leave node features unchanged.  We observe that the feature loss decreases slightly slower with FD (the features are more complex) but the minimization of the structure term is largely accelerated.
    \label{fig:FDcurve}}
\end{figure}

\subsection{Effect of the OT relaxation on the performances \label{appendix:OTrelax}}

As stated in \ref{sec:OTGM}, we adopted the OT point of view when designing Any2Graph. In practice, this means that we do not project the OT plan back to the set of permutations with a Hungarian matcher before plugging it in the loss as in \citet{simonovsky2018graphvae}. Testing the effect of adding this extra step we observed a $5\%$ to $10\%$ increase of the edit distance across datasets (table \ref{tab:OTrelax}) along with a more unstable training curve (figure \ref{fig:OTrelax}). This confirms that a continuous transport plan provides a slightly more stable gradient than a discrete permutation, which aligns with the findings of \citet{de2023unbalanced} on the similar topic of object detection.

\begin{table}[h!]
    \centering
    \begin{tabular}{l|ccccr}
        \toprule
         Dataset & \textit{Coloring} & \textit{Toulouse} & \textit{USCities} & \textit{QM9}& \textit{GDB13} \\
         \midrule
         ED without Hungarian  & \textbf{0.20} & \textbf{0.13} & \textbf{1.86} & 2.13 & \textbf{3.63} \\
         ED with Hungarian  & 0.23 & 0.15 & 2.03 & \textbf{2.08} & 3.86 \\
         \bottomrule
    \end{tabular}
    \vspace{0.5cm}
    \caption{Effect of adding Hungarian Matching on the performances evaluated with the test edit distance. We observe, that Hungarian Matching slightly decreases the performances on all datasets but \textit{QM9}.}
    \label{tab:OTrelax}
\end{table}

\begin{figure}[h!]
    \centering
    \includegraphics[width=0.6\linewidth]{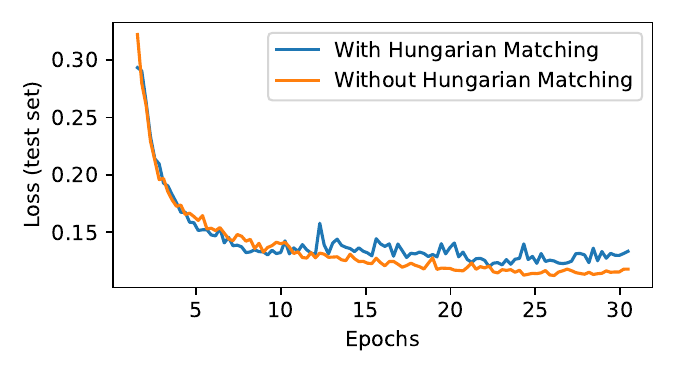}
    \caption{First epochs of training for \textit{GDB13} with and without projection of the optimal transport plan to the set of permutations with Hungarian matching. Hungarian matching slightly decreases the performances and induces more oscillations of the loss, which could be explained by a less stable gradient. \label{fig:OTrelax}}
\end{figure}

\subsection{Repeatability \label{appendix:seeds}}

We checked the robustness of Any2Graph to the seed used for training. For each dataset, we ran 5 times the whole training pipeline with 5 random seeds. The results are displayed in table \ref{tab:randomseeds}. Notably, we observe that all state-of-the-art performances observed in table \ref{tab:bigtable} hold even if we had reported the worst seed (we reported the first seed).

\begin{table}[h!]
    \centering
    \begin{tabular}{l|ccccr}
        \toprule
         Dataset & \textit{Coloring} & \textit{Toulouse} & \textit{USCities} & \textit{QM9}& \textit{GDB13} \\
         \midrule
         Edit. (Max)  & \textit{0.23} & \textit{0.16} & \textit{2.08} & \textit{2.19}& \textit{3.63} \\
         Edit. (Mean)  & \textit{0.21} & \textit{0.14} & \textit{1.89} & \textit{2.10}& \textit{3.49} \\
         Edit. (Min)   & \textit{0.2} & \textit{0.13} & \textit{1.7} & \textit{2.03}& \textit{3.43} \\
         Edit. (Std)  & \textit{0.01} & \textit{0.08} & \textit{0.14} & \textit{0.05}& \textit{0.07} \\
         \bottomrule
    \end{tabular}
    \vspace{0.5cm}
    \caption{For each dataset, we report the maximum, average, minimum and the standard deviation of the Any2Graph test edit distance over 5 random seeds.}
    \label{tab:randomseeds}
\end{table}

\newpage

\section{Additional Qualitative Results \label{appendix:qualitative}}
\label{sec:quali_annex}

\subsection{Qualitative results on COLORING}

\begin{figure}[h!]
    \centering
    \includegraphics[height=0.80\textheight]{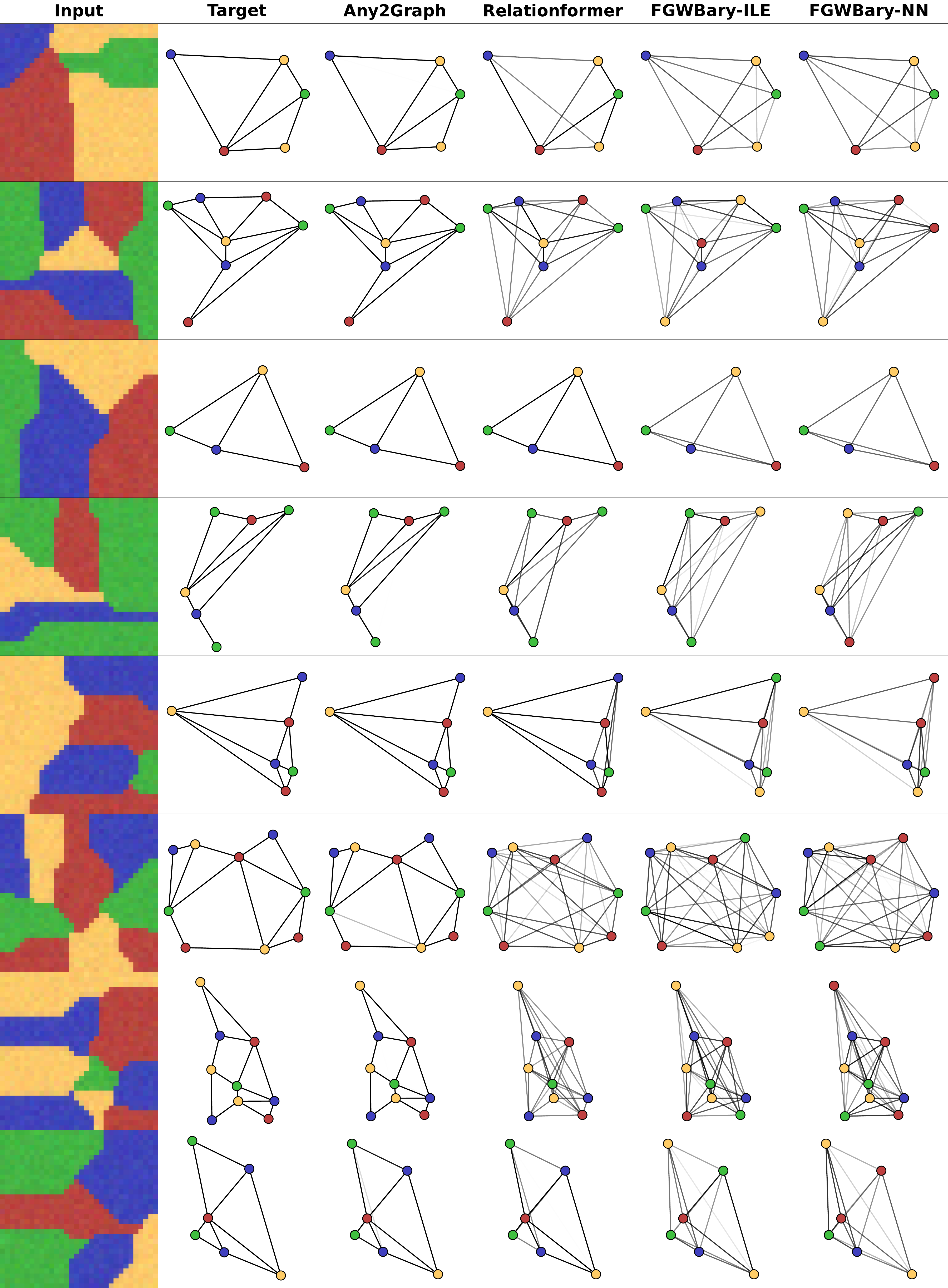}
    \caption{Graph prediction on the \textit{Coloring} dataset.}
    \label{fig:ore-coloring}
\end{figure}

\newpage

\subsection{Qualitative results on QM9}

\begin{figure}[h!]
    \centering
    \includegraphics[height=0.70\textheight]{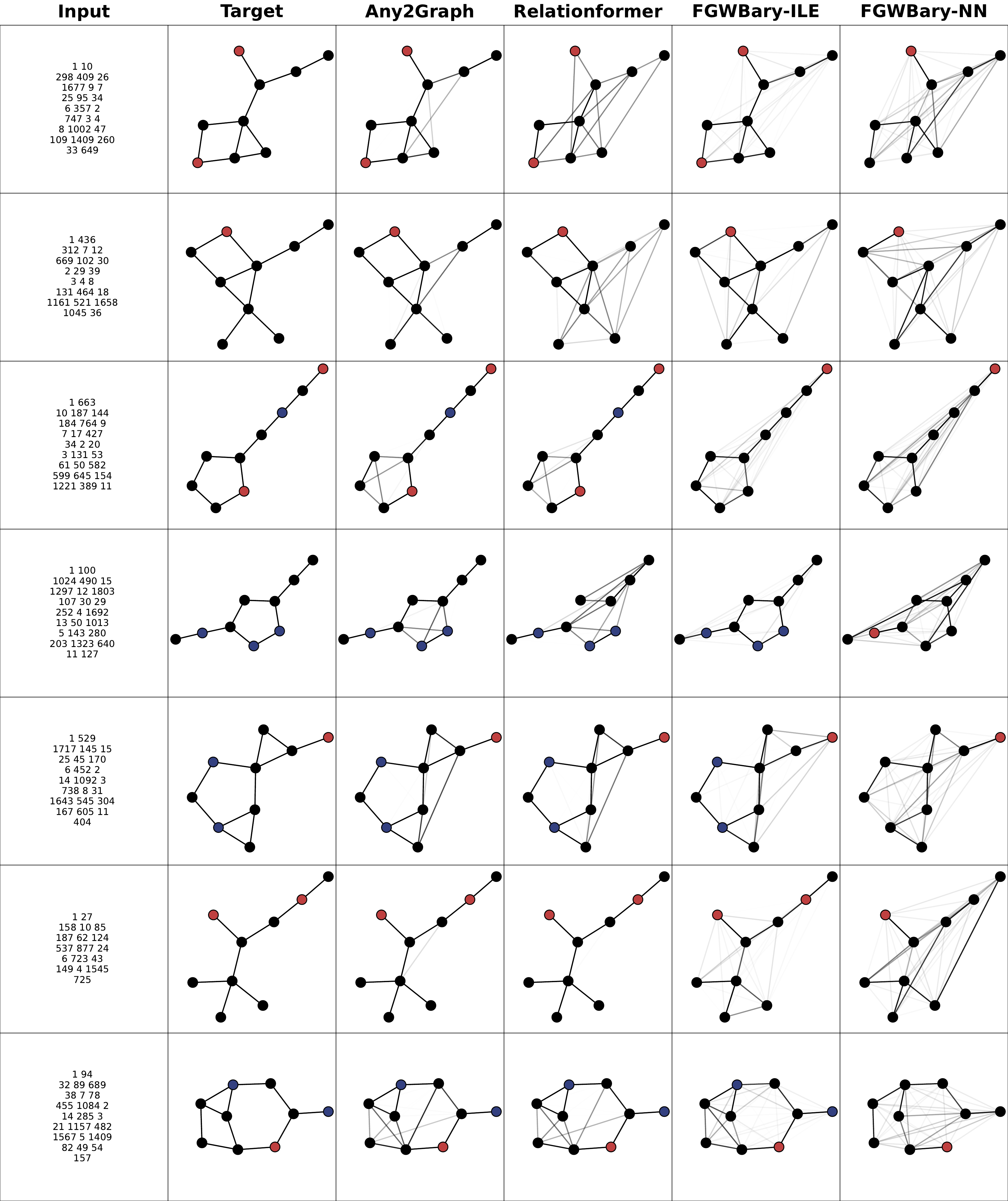}
    \caption{Graph prediction on the \textit{QM9} dataset.}
\end{figure}

\newpage

\subsection{Qualitative results on GDB13}

\begin{figure}[h!]
    \centering
    \includegraphics[height=0.80\textheight]{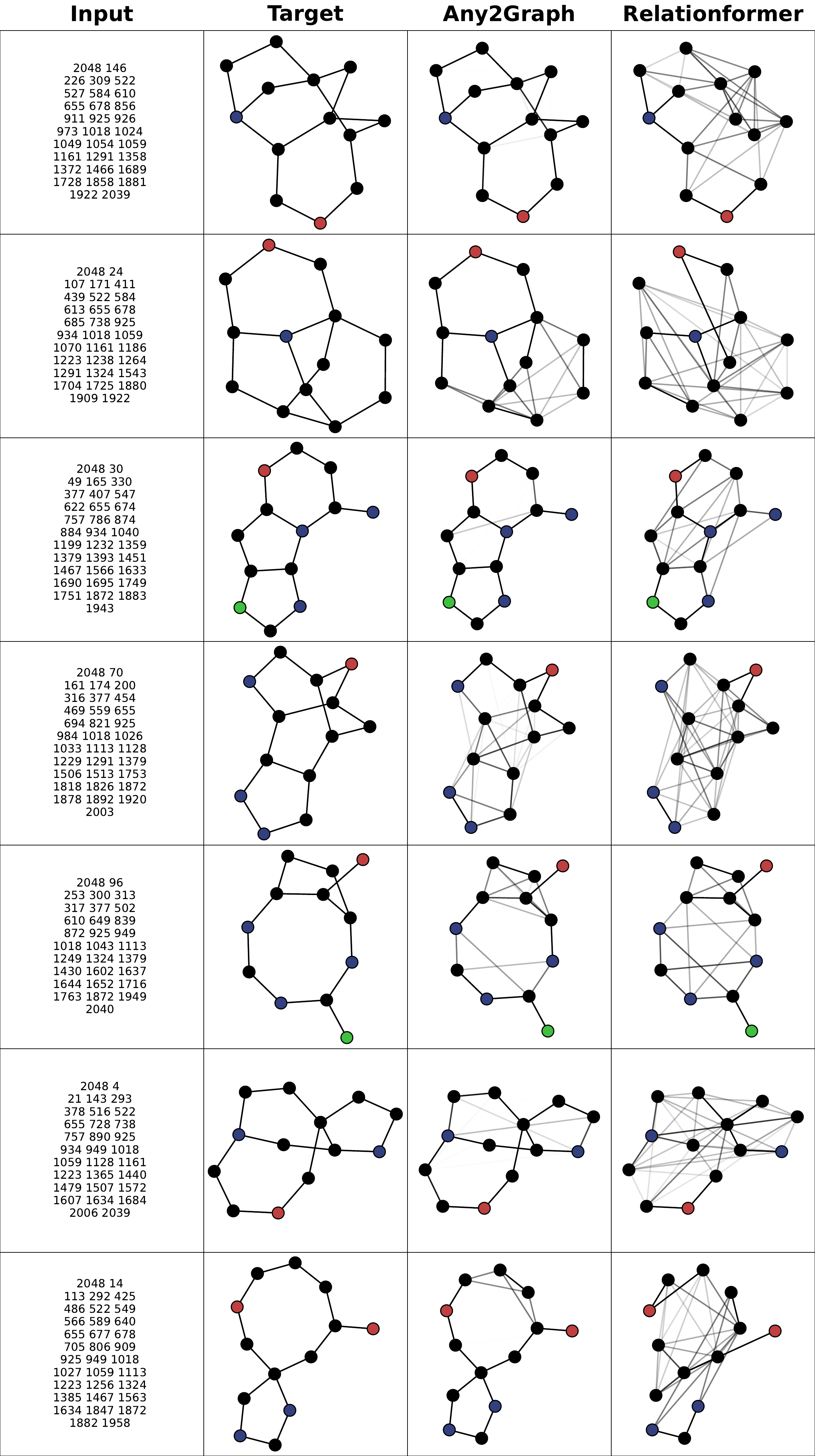}
    \caption{Graph prediction on the \textit{GDB13} dataset. }
\end{figure}

\newpage

\subsection{Qualitative results on TOULOUSE}

\begin{figure}[h!]
    \centering
    \includegraphics[height=0.80\textheight]{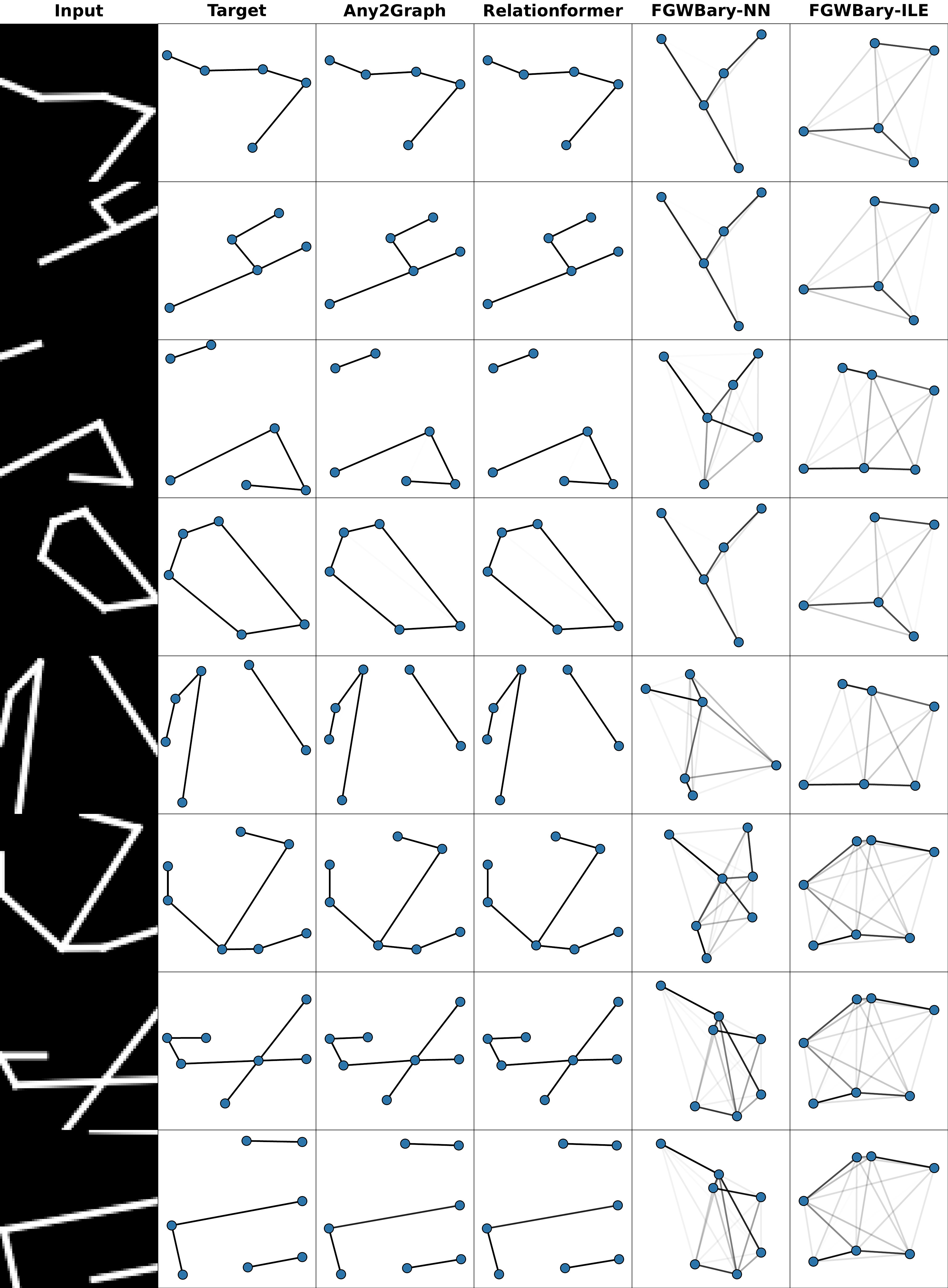}
    \caption{Graph prediction on the \textit{Toulouse} dataset. }

\end{figure}

\newpage

\subsection{Qualitative results on USCities}

\begin{figure}[h!]
    \centering
    \includegraphics[height=0.80\textheight]{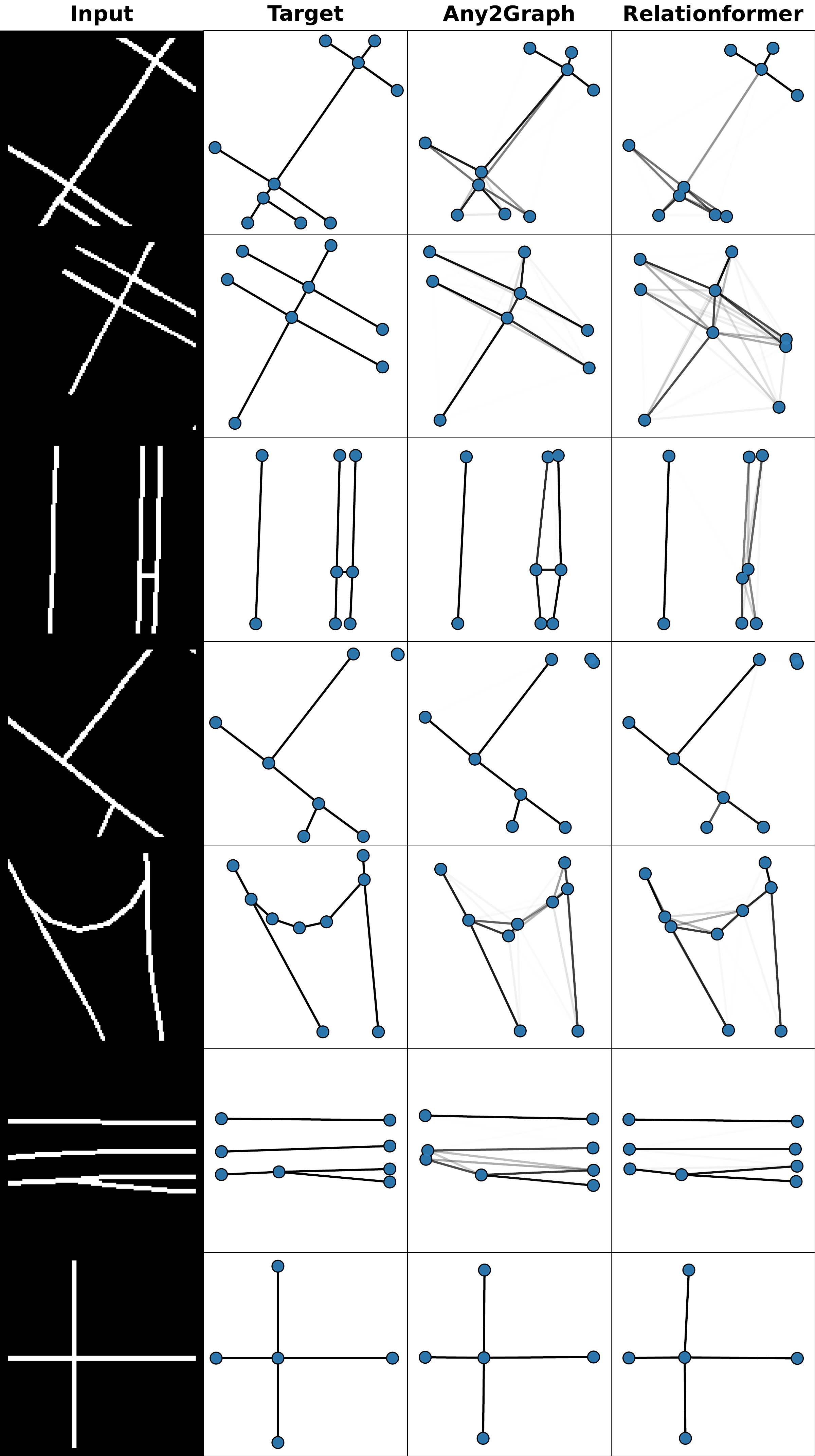}
    \caption{Graph prediction on the \textit{USCities} dataset.}
\end{figure}

\newpage

\subsection{Out of distribution performances}

We tested if, once trained on Toulouse dataset, the predictive model is able to cope with out-of-distribution data. Figure \ref{fig:ILoveML} shows that this is the case on these toy images, that are not related to satellite images or road maps. We leave for future work the investigation of this property.

\begin{figure}[h!]
\vskip 0.2in
\begin{center}
\centerline{\includegraphics[width=0.750\columnwidth]{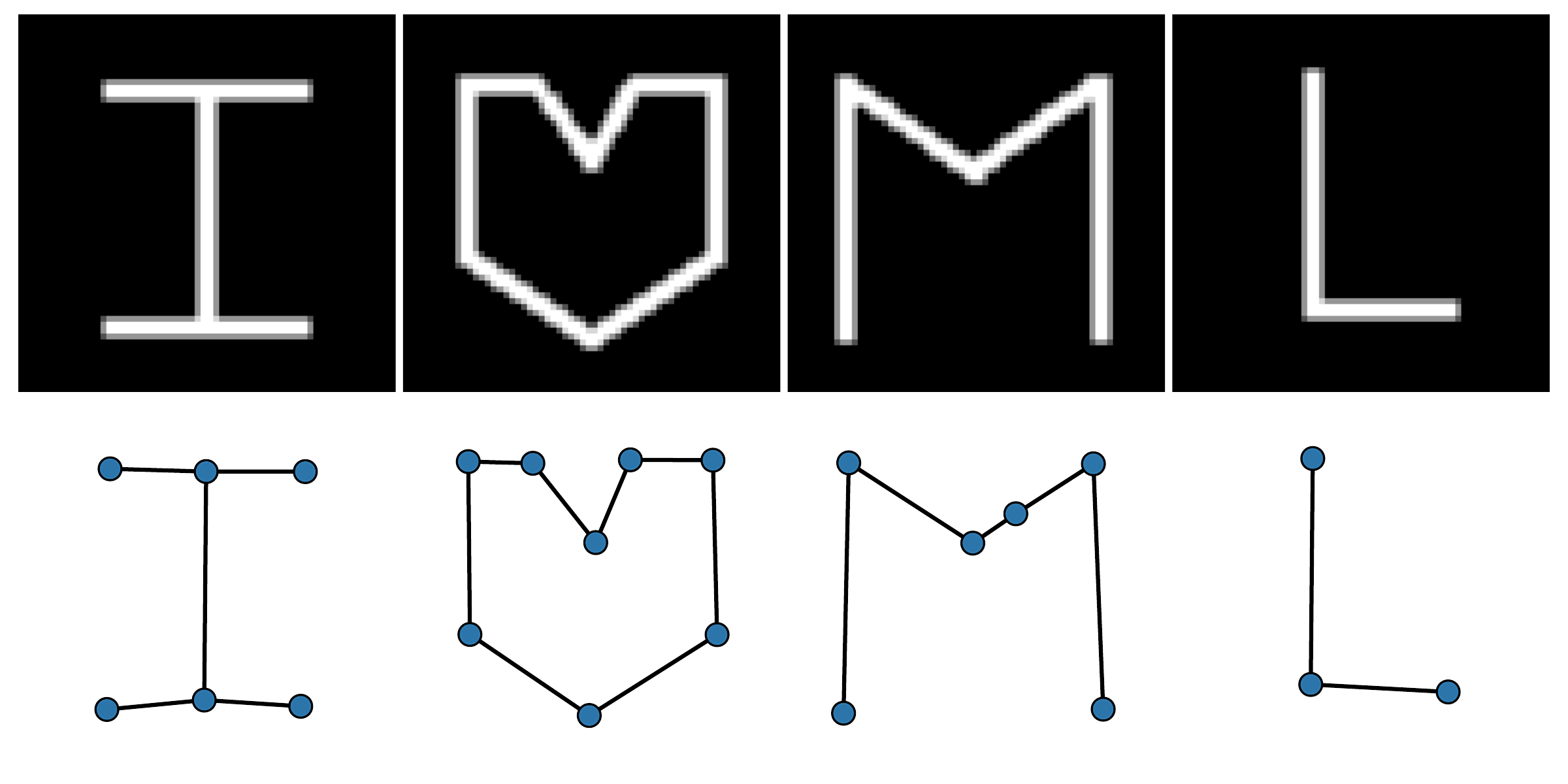}}
\vspace{-3mm}
\caption{Any2Graph trained on Toulouse performing on out-of-distribution inputs. Input images are displayed on top row and prediction in the bottom row.}
\label{fig:ILoveML}
\end{center}
\vskip -0.2in
\end{figure}

\end{document}